\definecolor{cite_color}{HTML}{2d85f0} 
\definecolor{link_color}{RGB}{0, 48, 143}
\def\eqref#1{equation~\ref{#1}}
\def\1{\bm{1}}
\def\vzero{{\bm{0}}}
\def\vone{{\bm{1}}}
\def\vtheta{{\bm{\theta}}}
\def\va{{\bm{a}}}
\def\vh{{\bm{h}}}
\def\vm{{\bm{m}}}
\def\vp{{\bm{p}}}
\def\vv{{\bm{v}}}
\def\vx{{\bm{x}}}
\def\vz{{\bm{z}}}
\def\mG{{\bm{G}}}
\def\mI{{\bm{I}}}
\def\mR{{\bm{R}}}
\def\mS{{\bm{S}}}
\def\mX{{\bm{X}}}
\DeclareMathAlphabet{\mathsfit}{\encodingdefault}{\sfdefault}{m}{sl}
\SetMathAlphabet{\mathsfit}{bold}{\encodingdefault}{\sfdefault}{bx}{n}
\def\gA{{\mathcal{A}}}
\def\gB{{\mathcal{B}}}
\def\gC{{\mathcal{C}}}
\def\gF{{\mathcal{F}}}
\def\gG{{\mathcal{G}}}
\def\gN{{\mathcal{N}}}
\def\gO{{\mathcal{O}}}
\def\sP{{\mathbb{P}}}
\def\sR{{\mathbb{R}}}
\newcommand{\E}{\mathbb{E}}
\newcommand{\R}{\mathbb{R}}
\newcommand{\softmax}{\mathrm{softmax}}
\newtheorem{definition}{Definition}
\newtheorem{theorem}{Theorem}[section]
\newtheorem{lemma}[theorem]{Lemma}
\newtheorem{proposition}[theorem]{Proposition}
\definecolor{google_yellow}{HTML}{ffbc32}
\definecolor{google_red}{HTML}{f4433c}
\definecolor{google_blue}{HTML}{2d85f0}
\definecolor{google_green}{HTML}{009925}
\definecolor{Gray}{gray}{0.9}
\definecolor{mygreen}{rgb}{0.0, 0.5, 0.0}
\definecolor{myred}{rgb}{0.8, 0.25, 0.33}
\definecolor{myblue}{rgb}{0.19, 0.55, 0.91}
\definecolor{uclablue}{rgb}{0.15, 0.45, 0.68}
\definecolor{boxgreen}{rgb}{0.02, 0.66, 0.02}
\definecolor{boxred}{rgb}{0.66, 0.1, 0.1}
\definecolor{boxblue}{rgb}{0.01, 0.01, 0.73}
\definecolor{mygray}{gray}{0.4}
\newcommand{\HOMO}{{\mathrm{HOMO}}}
\newcommand{\LUMO}{{\mathrm{LUMO}}}
\newcommand{\ucv}{{$\frac{\mathrm{cal}}{{\mathrm{mol}}}\mathrm{K}$}}
\newcommand{\ualpha}{{$\mathrm{Bohr}^3$}}
\newcommand{\diff}{{\mathrm{d}}}
\definecolor{mydarkblue}{rgb}{0,0.08,0.45}
\newcommand{\method}{{\text{TFG-Flow}}\xspace}
\newcommand{\rebuttal}[1]{{\color{black}{#1}}}
\crefname{theorem}{Theorem}{Theorems}
\crefname{proposition}{Proposition}{Propositions}
\crefname{table}{Table}{Tables}
\crefname{section}{Sec.}{Sec.}
\crefname{equation}{Eq.}{Eq.}
\crefname{appendix}{App.}{App.}
\crefname{algorithm}{Algorithm}{Algorithms}
\title{TFG-Flow: Training-free Guidance in Multimodal Generative Flow}
\author{
Haowei Lin$^{1*}$,~~Shanda Li$^{2}$\thanks{These two authors contributed equally to the paper. Correspondence to Haowei Lin (\url{linhaowei@pku.edu.cn}) and Jianzhu Ma (\url{majianzhu@tsinghua.edu.cn})},~~Haotian Ye$^{3}$ \\
\textbf{~Yiming Yang$^2$,~~Stefano Ermon$^3$,~~Yitao Liang$^{1}$,~~and Jianzhu Ma$^{4}$} \\ 
$^1$Peking University, $^2$Carnegie Mellon University, $^3$Stanford University, $^4$Tsinghua University \\
}
\begin{document}
\doparttoc 
\faketableofcontents 

\maketitle


\begin{abstract}

Given an unconditional generative model and a predictor for a target property (e.g., a classifier), the goal of training-free guidance is to generate samples with desirable target properties without additional training. As a highly efficient technique for steering generative models toward flexible outcomes, training-free guidance has gained increasing attention in diffusion models. However, existing methods only handle data in continuous spaces, while many scientific applications involve both continuous and discrete data (referred to as multimodality). Another emerging trend is the growing use of the simple and general flow matching framework in building generative foundation models, where guided generation remains under-explored.
To address this, we introduce \textbf{\method}, a novel training-free guidance method for multimodal generative flow. \method addresses the curse-of-dimensionality while maintaining the property of unbiased sampling in guiding discrete variables. We validate \method on four molecular design tasks and show that \method has great potential in drug design by generating molecules with desired properties.\footnote{Code is available at \url{https://github.com/linhaowei1/TFG-Flow}.}

\end{abstract}

\section{Introduction}

Recent advancements in generative foundation models have demonstrated their increasing power across a wide range of domains~\citep{reid2024gemini,achiam2023gpt,abramson2024accurate}. In particular, diffusion-based foundation models, such as Stable Diffusion~\citep{esser2024scaling} and SORA~\citep{videoworldsimulators2024} have achieved significant success, catalyzing a new wave of applications in areas such as art and science. As these models become more prevalent, a critical question arises: how can we steer these foundation models to achieve specific properties during inference time?

\rebuttal{One promising direction is using classifier-based guidance~\citep{dhariwal2021diffusion} or classifier-free guidance~\citep{ho2022classifier}, which typically necessitate training a specialized model for each conditioning signal (e.g., a noise-conditional
classifier or a text-conditional denoiser). This resource-intensive and time-consuming process greatly limits their applicability. Recently, there has been growing interest in \emph{training-free guidance for diffusion models}, which allows users to steer the generation process using an off-the-shelf differentiable target predictor without requiring additional model training~\citep{tfg}. A target predictor can be any classifier, loss, or energy function used to score the quality of the generated samples. Training-free guidance offers a flexible and efficient means of customizing generation, holding the potential to transform the field of generative AI.}

Despite significant advances in generative models, most existing training-free guidance techniques are tailored to diffusion models that operate on continuous data, such as images. However, extending generative models to jointly address both discrete and continuous data—referred to as multimodal data~\citep{campbell2024generative}—remains a critical challenge for broader applications in scientific fields~\citep{wang2023scientific}. One key reason this expansion is essential is that many real-world problems involve multimodal data, such as molecular design, where both discrete elements (e.g., atom types) and continuous attributes (e.g., 3D coordinates) must be modeled together. To address this, recent generative foundation models have increasingly adopted the flow matching framework~\citep{esser2024scaling}, prized for its simplicity and general applicability to both data types. Multiflow, recently introduced by~\citet{campbell2024generative} on protein co-design problem, presents a promising foundation for tackling multimodal generation via Continuous Time Markov Chains~\citep{anderson2012continuous}.

\begin{figure}[t]
    \centering
    \vspace{-0.4em}
    \includegraphics[width=\textwidth]{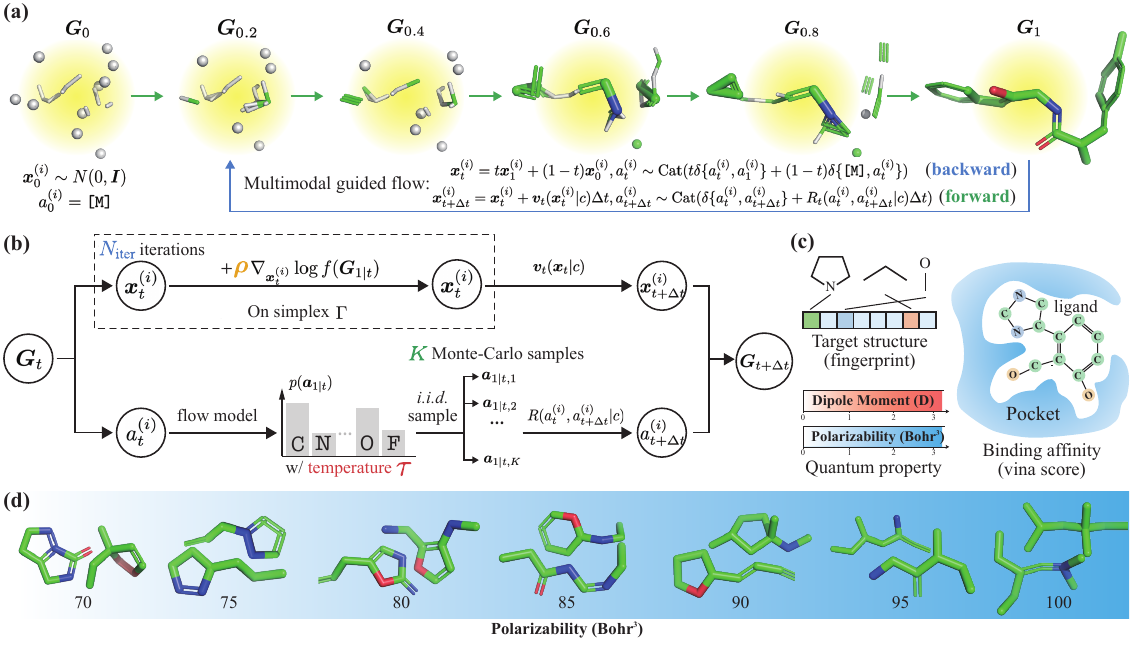}
    \vspace{-1.3em}
    \caption{\textbf{(a) Multimodal guided flow.} The backward flow ({\color{google_blue}blue arrow}) is constructed with linear interpolation between observed data $\mG_1$ and sampled noise $\mG_0$; the forward flow ({\color{google_green}green arrows}) is simulated by conditional velocity $\vv_t(\vx_t^{(i)}|c)$and conditional rate matrix $R_t(a_t^{(i)},a_{t+\Delta t}^{(i)}|c)$. \textbf{(b) The illustration of \method.} \method guides the forward flow on each step $\mG_t$, which consists of a gradient-based guidance for continuous part $\mX_t$ and an importance sampling based guidance for discrete part $\va_t$; \textbf{(c) Some examples of guidance targets.} \textbf{(d) Samples guided by \method targeted at polarizability.} The targeted value is at bottom of the samples.}
    \label{fig:main}
    \vspace{-0.5em}
\end{figure}

Unfortunately, guided generation within the flow matching framework remains relatively underexplored, due to the inherent differences between guiding continuous and discrete data. This paper investigates the problem of training-free guidance in multimodal flow matching, with a specific focus on its application to inverse molecular design~\citep{zunger2018inverse}. Inverse molecular design is a challenging task that involves generating molecules that meet specific target properties, such as a desired level of polarizability. Our method, \method, construct a guided flow that aligns with target predictor while preserving the marginals of unguided flow, effectively enabling plug-and-play guidance (\cref{thm.independence,theorem.guided_flow}). Unlike continuous variables, where gradient information is inherently informative, discrete variables cannot be adjusted continuously. Naive guidance methods that estimate transition probabilities between discrete states suffer from the \emph{curse of dimensionality}, a long-lasting problem in machine learning that is computationally intractable. To address this, \method devises a consistent Monte-Carlo sampling approach that reduces the complexity from exponential to logarithmic for discrete guidance (\cref{thm.discrete-error-bound}), while leveraging a partial derivative-based guidance mechanism for continuous variables that preserves geometric invariance (\cref{thm.invariance}).

We apply \method to various inverse molecular design tasks. When targeted to quantum properties, \method is able to generate more accurate molecules than existing training-free guidance methods for continuous diffusion (with an average relative improvement of +20.3\% over the best baseline). When targeted to specific molecular structures, \method improves the similarity to target structures of unconditional generation by more than 20\%. When targeted at multiple properties, \method outperforms conditional multimodal flow significantly. We also apply \method to pocket-based drug design tasks, where \method can guide the flow to generate molecules with more realistic 3D structures and better binding energies towards the protein binding sites compared to the baselines.

Our main contributions are summarized as follows:

\begin{itemize}
    \item We present \method, a novel approach for guiding multimodal flow models toward target properties in a training-free manner (via off-the-shelf time-independent property functions).
    \item \method sets a solid theoretical foundation for multimodal guided flow, addresses the curse of dimensionality, and maintains geometric invariance via theorems \ref{thm.independence}$\sim$\ref{thm.invariance}.
    \item Experiments reveal that \method is effective and efficient for designing 3D molecules with desired properties, opening up new opportunities to explore the chemical space.

\end{itemize}

\section{Preliminary}
\label{sec:preliminary}
In this section, we provide readers with the essential knowledge required for 3D molecule generation using a multimodal flow model.

\subsection{SO(3) Invariant 3D Molecule Modeling and Generation} 
\label{sec:preliminary-se3}
\paragraph{3D molecular representations.} Let $\gA$ denote the set of all atom types plus a ``mask'' state $\texttt{[M]}$ which indicates an undetermined atom type and will be useful in our generative model. A molecule with $n$ atoms can be represented as $\mG = (\mX, \va)\in\gG$, where $\gG=\R^{3 \times n}\times \gA^n$, $\mX$ denotes the atomic coordinates, and $\va$ represents the atom types. We also use $\vx^{(i)}$ and $a^{(i)}$ to denote the coordinates and type of the $i$-th atom, respectively.

\paragraph{Invariant probabilistic modeling for 3D molecules.} Invariance is a crucial inductive bias in modeling 3D geometry of molecules~\citep{gilmer2017neural}. 
Molecular systems exist in three-dimensional Euclidean space, where the group associated with rotations is known as $\mathrm{SO}(3)$. 
In this work, we are interested in the distribution $p(\mX, \va)$ whose marginal of $\mX$ satisfies the following property:
\begin{equation}
    \textbf{$\mathrm{SO}(3)$-invariance:} \quad  p(\mX) = p(\mS\mX) \quad \text{for any } \mS\in\mathrm{SO}(3).
\end{equation}
Here, $\mathrm{SO}(3)$ denotes the set of all the rotation matrices in 3D space. Intuitively, the definition requires the likelihood of a molecular structure to be invariant with respect to any transformation in $\mathrm{SO}(3)$. 

Additionally, molecular representations need to be invariant to translations. To ensure this, we always project the atomic coordinates to $\Gamma=\{\mX \mid \vx^{(1)} + \cdots + \vx^{(n)}=\vzero \}$ via mean subtraction: $\mathrm{Project}_{\Gamma}(\mX):=\mX - \dfrac{1}{n}\sum\limits_{i=1}^n \vx^{(i)} \vone_{n}^{\mathsf T}$, 
where $\vone_{n}$ denotes the $n$-dimensional all-one row vector.

\paragraph{Equivariant graph neural networks.} In this work, we follow existing research and employ equivariant neural networks to model the $\mathrm{SO}(3)$-invariant distribution of molecule structures \citep{hoogeboom2022equivariant, xu2022geodiff}. Specifically, we apply Equivariant Graph Neural Networks (EGNNs) to process molecular representations \citep{satorras2021n}. EGNN takes molecular representations $\mG = (\mX, \va)$ as its input and translate the atom type $\va^{(i)}$ into $d$-dimensional embedding $\vh^{(i)}\in\sR^{d}$ for each atom. In each layer, EGNN updates atomic coordinates and embeddings as follows:
\begin{align}
    \vm^{(i,j)} &\leftarrow \Phi_{\mathrm{m}}(\vh^{(i)},\vh^{(j)}, \|\vx^{(i)}-\vx^{(j)}\|^2; \vtheta_{\mathrm{m}}); \\
    \vh^{(i)} &\leftarrow \Phi_{\mathrm{h}}\left(\vh^{(i)}, \sum_{j=1}^n \vm^{(i,j)};\vtheta_{\mathrm{h}}\right);\\
    \vx^{(i)} &\leftarrow \vx^{(i)} + \sum_{j=1}^n\left(\vx^{(i)}-\vx^{(j)}\right)\Phi_{\mathrm{x}}(\vm^{(i,j)};\vtheta_{\mathrm{x}}),
\end{align}
In the above, $\vm^{(i,j)}$ denotes intermediate message from atom $i$ to $j$, and $\Phi_{\mathrm{m}}, \Phi_{\mathrm{h}}, \Phi_{\mathrm{x}}$ are learnable modules parameterized by $\vtheta_{\mathrm{m}}, \vtheta_{\mathrm{h}}, \vtheta_{\mathrm{x}}$, respectively. We will formally prove that this architecture, along with the design of our algorithm, leads to $\mathrm{SO}(3)$-invariant distributions.


\subsection{multimodal Flow Model}
\label{sec:preliminary-multiflow}
Multiflow is a multimodal flow model originally developed for protein co-design \citep{campbell2024generative}. In this work, we adapt it for small molecule design. 
Multiflow constructs a probability flow $p_t(\mG_t)$ for $t \in [0,1]$, where $p_0(\mG_0)=p_{\text{noise}}(\mG_0)$ and $p_1(\mG_1)=p_{\text{data}}(\mG_1)$. During inference, one first samples $\mG_0 \sim p_0$ and then generates a sequence of $\mG_t$ values by simulating the flow.

\paragraph{Conditional flow, velocity, and rate matrix.} At the core of the sampling process is the multimodal conditional flow\footnote{\emph{Conditional flow} refers to the flow $p_{t|1}(\mG_t^{(i)}|\mG_1^{(i)})$ conditioned on the clean data $\mG_1^{(i)}$. This contrasts with the desired \emph{unconditional} flow for sampling, \textit{i.e.}, $p_t(\mG_t^{(i)})$, which can be derived using~\cref{eq.proposition} from the conditional flow.} $p_{t|1}(\mG_t | \mG_1)$. By design, this flow can be factorized over both the number of atoms and their modalities: 
\begin{equation}
    p_{t|1}(\mG_t | \mG_1) := \prod_{i=1}^n p_{t|1}(\vx_t^{(i)} | \vx_1^{(i)}) p_{t|1}(a_t^{(i)} | a_1^{(i)}). 
    \label{eq.factorization} 
\end{equation}
In the above, the flows $p_{t|1}(\vx_t^{(i)}|\vx_1^{(i)})$ (continuous) and $p_{t|1}(a_t^{(i)}|a_1^{(i)})$ (discrete) are defined such that they transport the noise distribution to the data distribution following straight-line paths, which is known as \emph{rectified flow}~\citep{liu2022flow}: $\vx_t^{(i)}|\vx_1^{(i)} \sim \mathcal{N}(t\vx_1^{(i)}, (1-t)^2 \boldsymbol{I})$ and $a_t^{(i)}|a_1^{(i)} \sim \mathrm{Cat}(t \delta\{a_t^{(i)}, a_1^{(i)}\} + (1-t) \delta\{\texttt{[M]}, a_t^{(i)}\})$. Here, $\delta\{b, b'\}$ is the Kronecker delta, which is 1 when $b = b'$ and 0 otherwise, and $\mathrm{Cat}$ denotes a Categorical distribution over $\gA$.

With the definition, we can build in closed form the conditional velocity $\vv_{t|1}(\cdot |\vx_1^{(i)})$ and conditional rate matrix $R_{t|1}(\cdot, \cdot|a_1^{(i)})$ which characterize the dynamics of the above conditional distributions:
\begin{equation}
\label{eq.conditional-velocity-rate-matrix}
    \vv_{t|1}(\vx_t^{(i)}|\vx_1^{(i)}) = \frac{\vx_1^{(i)} - \vx_t^{(i)}}{1-t};\quad R_{t|1}(a_t^{(i)},b|a_1^{(i)}) = 
    \frac{\delta\{b, a_1^{(i)}\} \delta\{a_t^{(i)}, \texttt{[M]}\}}{1-t}.
\end{equation}

\paragraph{Deriving the unconditional flow.} With~\cref{eq.proposition}, the conditional velocity and rate matrix can then be used to derive their unconditional counterparts by sampling from $p_{1|t}(\mG_1 |\mG_t)$, which the flow model learns during training. \rebuttal{In Multiflow and our implementation, the flow model takes $\mG_t$ as the input and predict the expectation of $\vx_{1|t}$ for the continuous part as well as the distribution $p_{1|t}(\va_1|\mG_t)$ for the discrete part. Note that directly modeling the expectation $\mathbb E_{1|t}[\vx_{1|t}|\mG_t]$ instead of the distribution $p_{1|t}(\vx_{1|t}|\mG_t)$ is not a problem since $\vv_{t|1}$ is linear $w.r.t.$ $\vx_1$, which will be discussed in~\cref{app:mflow_assumption}.}
\begin{equation}
    \vv_t(\vx_t^{(i)}) = \E_{p_{1|t}(\mG_1 | \mG_t)}\left[\vv_{t|1}(\vx_t^{(i)} | \vx_1^{(i)}) \right]; \quad
    R_t(a_t^{(i)}, b) = \E_{p_{1|t}(\mG_1 | \mG_t)}\left[R_{t|1}(a_t^{(i)}, b| a_1^{(i)}) \right]. \label{eq.proposition}
\end{equation}

The unconditional continuous and discrete flows satisfy the Fokker-Planck and Kolmogorov Equations: $\partial_t p_{t} = -\nabla \cdot (\vv_{t} p_{t})$ and $\partial_t \vp_{t} = \vp_{t} \mR_{t}$.\footnote{In the Kolmogorov Equation, the discrete flow should be seen as a row vector $\vp_t=(p_t(a))_{a\in\gA}$.} Therefore, during inference, starting from an initial sample $\mG_0\sim p_0$\footnote{In practice, we sample $\vx^{(i)}_0\sim\gN(\vzero, \mI_{3})$ and $a^{(i)}_0 = \texttt{[M]}$ independently for each $i$. As discussed in \cref{sec:preliminary-se3}, we additionally project $\mX_0$ to the simplex $\Gamma$.}, one can iteratively estimate the unconditional velocity and rate matrix at the current time step $t$ by sampling from the learned distribution $p_{1|t}(\mG_1 |\mG_t)$. The Fokker-Planck and Kolmogorov Equations can then be simulated to generate $\mG_{t+\Delta t}$ for the next time step, ultimately resulting in $\mG_1\sim p_1$ which approximates the data distribution.





\section{Methodology}

\paragraph{Problem setup.} Our objective is to develop an effective, \emph{training-free} method to guide an unconditional multimodal flow model to generate samples with desired properties. Formally, let $c$ denote our target property, and assume we have a time-independent target predictor $f_c(\mG_1) = p(c|\mG_1)$ that quantifies how well a given molecule $\mG_1$ satisfies the target property $c$. We also have access to a model $g(\mG_t)$, which represents $p_{1|t}(\mG_1|\mG_t)$ of a flow model $\{p_t(\mG_t)\}_{t\in[0,1]}$, i.e., it takes $\mG_t$ as input and returns the sample/likelihood of $\mG_1$, as defined in~\cref{sec:preliminary}. 

In this section, we present our construction of a multimodal guided flow $\{p_{t}(\mG_t|c)\}_{t\in[0,1]}$ that satisfies $p_1(\mG_1|c) = p_{\text{data}}(\mG_1|c)$. Then we derive an algorithm which simulates the constructed flow using the target predictor $f_c(\mG_1)$, the flow model $g(\mG_t)$, and a suitable velocity and rate matrix.

\subsection{Constructing the Guided Flow}

Recall that the critical factor in the inference with flow models is to derive the velocity and rate matrix in the Fokker-Planck and Kolmogorov Equations so that we can simulate the sample over time. For guided generation, we need to address the following two challenges:
\begin{itemize}
    \item Can we construct the guided flow ${p_{t}(\mG_t|c)}_{t\in[0,1]}$ and derive the corresponding guided velocity and rate matrix in the Fokker-Planck and Kolmogorov Equations of it?
    \item Can we efficiently estimate the guided velocity and rate matrix and simulate the Fokker-Planck and Kolmogorov Equations?
\end{itemize}

\paragraph{Constructing appropriate guided flows.} While infinitely many flows ${p_{t}(\mG_t|c)}_{t\in[0,1]}$ satisfy $p_1(\mG_1|c)=p_{\text{data}}(\mG_1|c)$, we seek to specify a joint distribution of $(\{\mG_t\}_{t\in[0,1]}, c)$ which can be simulated via minimal adaptation of the original flow when incorporating the guidance $f_c(\mG_1)$. 
An ideal guided flow should maintain three key characteristics: (1) preserve the overall structure and dynamics of the original flow model by maintaining the flow marginals $\{\mG_t\}_{t\in[0,1]}$; (2) align with the conditional distribution $f_c(\mG_1) = p(c|\mG_1)$; and (3) ensure independence between $\{\mG_t\}_{t\in[0,1)}$ and the condition $c$ when the clean data $\mG_1$ is observed.
These desiderata allows us to minimally modify the original flow while effectively steering the generation process towards the desired molecular properties. 
We formulate these intuitive criteria mathematically and prove the existence of a guided flow satisfying these conditions in the following theorem:

\begin{theorem}[Existence of the guided flow (\textit{informal})]
    \label{thm.independence}
    Let $\gG$ be the space of molecular representations and $\gC$ be a finite set which includes all the values of our target property. 
    Given a $\gG$-valued process $\{\mG_t\}_{t\in[0,1]}$ modeled by flow $\{p_{t}^{\mathrm{G}}(\mG_t)\}_{t\in[0,1]}$ and a function $f_c(\mG_1)$ which defines a valid distribution over $\gC$ for any $\mG_1\in\gG$, there exists a joint distribution $p$ of random variables $(\{\mG_t\}_{t\in[0,1]}, c)$ that satisfies the following:
    \begin{itemize}
        \item \textbf{Preservation of flow marginals:} For any $t\in[0,1]$, $p_{t}(\mG_t) = p_{t}^{\mathrm{G}}(\mG_t).$
        \item \textbf{Alignment with target predictor:} For any $c\in \gC$ and $\mG_1\in\gG$, $p(c|\mG_1)= f_c (\mG_1)$.
        \item \textbf{Conditional independence of trajectory and target:} For any $t\in[0,1)$, $\mG_{t}$ and $c$ are conditionally independent given $\mG_1$, i.e., $p_{t|1}(\mG_t|\mG_1,c)=p_{t|1}(\mG_t|\mG_1)$.
    \end{itemize}
\end{theorem}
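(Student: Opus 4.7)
The statement is an existence claim, so the plan is constructive: I would explicitly write down a joint law $p$ on $(\{\mG_t\}_{t\in[0,1]},c)$ and then verify each of the three bullet points holds. The natural construction is to take the original trajectory law $p^{\mathrm{G}}$ on $\{\mG_t\}_{t\in[0,1]}$ as the $c$-free marginal, and to sample $c$ from $f_{\cdot}(\mG_1)$ using only the endpoint $\mG_1$. Concretely, I would define
\begin{equation*}
p(\{\mG_t\}_{t\in[0,1]},c) \;:=\; p^{\mathrm{G}}(\{\mG_t\}_{t\in[0,1]})\cdot f_c(\mG_1),
\end{equation*}
which is a valid probability measure on $\gG^{[0,1]}\times\gC$ because $\gC$ is finite and $f_{\cdot}(\mG_1)$ is by hypothesis a valid distribution over $\gC$ for every fixed $\mG_1$.

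\textbf{Verifying the three properties.} First, marginalizing over $c$ gives $\sum_{c\in\gC} p^{\mathrm{G}}(\{\mG_t\})f_c(\mG_1)=p^{\mathrm{G}}(\{\mG_t\})$, so every finite-dimensional marginal of $\{\mG_t\}_{t\in[0,1]}$ agrees with the original flow; in particular $p_t(\mG_t)=p^{\mathrm{G}}_t(\mG_t)$ for each $t\in[0,1]$. Second, the conditional $p(c\mid\mG_1)$ is obtained by dividing the joint law of $(\mG_1,c)$ by $p_1^{\mathrm{G}}(\mG_1)$; since the construction factors $c$ out of the rest of the trajectory, this conditional equals $f_c(\mG_1)$ exactly. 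Third, for any $t\in[0,1)$, conditioning on $\mG_1$ in the above display yields
\begin{equation*}
p_{t\mid 1}(\mG_t\mid \mG_1,c) \;=\; \frac{p^{\mathrm{G}}_{t,1}(\mG_t,\mG_1)\,f_c(\mG_1)}{p_1^{\mathrm{G}}(\mG_1)\,f_c(\mG_1)} \;=\; p^{\mathrm{G}}_{t\mid 1}(\mG_t\mid\mG_1),
\end{equation*}
which is independent of $c$, so $\mG_t\perp c\mid \mG_1$.

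\textbf{Where care is needed.} The construction itself is routine; the only subtlety I anticipate is the measure-theoretic one of treating $\{\mG_t\}_{t\in[0,1]}$ as a genuine stochastic process rather than just a collection of marginals, so that ``preservation of flow marginals'' and the factorization above are well defined. I would handle this by working with the canonical law on path space induced by $p^{\mathrm{G}}$ (or equivalently, by verifying the claim on arbitrary finite-dimensional marginals and invoking Kolmogorov's extension theorem); the conditioning on $\mG_1$ in the third bullet is then the standard regular conditional distribution, which exists because $\gG$ is a standard Borel space. The degenerate case $p_1^{\mathrm{G}}(\mG_1)=0$ is handled by restricting the conditional statements to the support of $\mG_1$, which is the usual convention.
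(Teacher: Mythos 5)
Your proposal is correct and matches the paper's argument: the paper's measure-theoretic construction, which disintegrates over $\mG_1'$ by defining $\sP^{\mG_1'}$ as the product of the conditional trajectory law $\sP_{\mathrm{G}}^{\mG_1'}$ and $\sP_{\mathrm{C}}^{\mG_1'}(\cdot)=f_{\cdot}(\mG_1')$ and then integrating against the marginal of $\mG_1$, is precisely what your compact formula $p^{\mathrm{G}}(\{\mG_t\})\,f_c(\mG_1)$ unpacks to, and the verification of the three bullets proceeds the same way. You also correctly anticipate the only technical subtlety (working on path space with regular conditional distributions), which the paper addresses by making the disintegration explicit and by stating conditional independence via the tower-property identity $\E[\varphi_{\mathrm{G}}\varphi_{\mathrm{C}}\mid\mG_1]=\E[\varphi_{\mathrm{G}}\mid\mG_1]\E[\varphi_{\mathrm{C}}\mid\mG_1]$ rather than by dividing by $f_c(\mG_1)$, thereby avoiding the $f_c(\mG_1)=0$ corner case.
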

    
The formal version of \cref{thm.independence} based on measure theory, along with its proof, can be found in \cref{app.proof-independence}. We note that the conditional independence of trajectory and target is particularly appealing because it ensures that the central tool in flow model -- the conditional flow -- remains unchanged under guidance. Specifically, the conditional independence of trajectory and target ensures that for a Multiflow model (defined in \cref{sec:preliminary-multiflow}) under guidance,  the guided conditional flow $p_{t|1}(\mG_t | \mG_1, c) $ still satisfies the factorization property as defined in \cref{eq.factorization}:
\begin{equation}
    p_{t|1}(\mG_t | \mG_1, c) := \prod_{i=1}^n p_{t|1}(\vx_t^{(i)} | \vx_1^{(i)}) p_{t|1}(a_t^{(i)} | a_1^{(i)}).
    \label{eq.guided-factorization} 
\end{equation}
Furthermore, in the guided flow, the flow marginals are preserved, hence the conditional velocity and rate matrix for the continuous and discrete conditional flows are the same as those defined in \cref{eq.conditional-velocity-rate-matrix}.

\paragraph{Finding the guided velocity and rate matrix.} Parallel to \cref{sec:preliminary-multiflow}, it now suffices to derive the unconditional guided velocity and rate matrix for the continuous and discrete flows based on the conditional counterpart. We present our result in the following theorem:

\begin{theorem}[Guided velocity and rate matrix (\textit{informal})]
\label{theorem.guided_flow}
    Consider a continuous flow $p_t(\vx_t)$ with conditional velocity $\vv_{t|1}(\vx_t|\vx_1)$ and a discrete flow model $p_t(a_t)$ with conditional rate matrix $R_{t|1}(a_t,b|a_1)$. Then the guided flows $p_t(\vx_t|c)$ and $p_t(a_t|c)$, defined via the construction in \cref{thm.independence}, can be generated by $\vv_t(\vx_t|c)$ and $R_t(a_t,j|c)$ via Fokker-Planck Equation $\partial_t p_t = -\nabla\cdot(\vv_tp_t)$ and Kolmogorov Equation $\partial_t\vp_t=\vp_t\mR_t$, where    
    \begin{align}
        \label{eq.guided-v}
        &\textbf{Guided velocity: }\qquad ~ ~ \vv_t(\vx_t|c) = \E_{p_{1|t}(\vx_1|\vx_t,c)}\left[\vv_{t|1}(\vx_t|\vx_1) \right], \\
        &\textbf{Guided rate matrix: }~R_t(a_t, b|c) = \E_{p_{1|t}(a_1|a_t,c)}\left[R_{t|1}(a_t,b|a_1) \right].
        \label{eq.guided-R}
    \end{align}
\end{theorem}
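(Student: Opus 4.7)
The plan is to prove both identities by the standard marginalization-plus-Bayes argument, leveraging the conditional independence $p_{t|1}(\mG_t\mid\mG_1,c)=p_{t|1}(\mG_t\mid\mG_1)$ from \cref{thm.independence}. That independence is the single ingredient that makes the conditional velocity/rate matrix of the guided flow coincide with the original $\vv_{t|1}$ and $R_{t|1}$; once this is in hand, the rest is essentially the same computation as in \cref{eq.proposition}, but with the clean-data distribution replaced by the posterior $p_{1|t}(\cdot \mid \cdot, c)$.

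For the continuous component I would first write the guided marginal as
\[
p_t(\vx_t\mid c)=\int p_{t|1}(\vx_t\mid \vx_1,c)\,p_1(\vx_1\mid c)\,\diff\vx_1 =\int p_{t|1}(\vx_t\mid \vx_1)\,p_1(\vx_1\mid c)\,\diff\vx_1,
\]
using the conditional independence. Differentiating in $t$ and pulling the derivative inside the integral, the conditional Fokker-Planck equation $\partial_t p_{t|1}(\vx_t\mid\vx_1)=-\nabla\!\cdot\!\bigl(\vv_{t|1}(\vx_t\mid\vx_1)p_{t|1}(\vx_t\mid\vx_1)\bigr)$ gives
\[
\partial_t p_t(\vx_t\mid c)=-\nabla\!\cdot\!\int \vv_{t|1}(\vx_t\mid\vx_1)\,p_{t|1}(\vx_t\mid\vx_1)\,p_1(\vx_1\mid c)\,\diff\vx_1.
\]
Applying Bayes' rule (again using conditional independence) to rewrite $p_{t|1}(\vx_t\mid\vx_1)p_1(\vx_1\mid c)=p_{1|t}(\vx_1\mid\vx_t,c)\,p_t(\vx_t\mid c)$, and comparing with the defining equation $\partial_t p_t(\vx_t\mid c)=-\nabla\!\cdot\!\bigl(\vv_t(\vx_t\mid c)\,p_t(\vx_t\mid c)\bigr)$, we can read off \cref{eq.guided-v}. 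A brief argument is needed to go from equality of divergences to equality of the vector fields; this is standard under mild regularity (e.g.\ the natural choice of $\vv_t(\cdot\mid c)$ is the one satisfying the continuity equation pointwise, matching the definition via conditional expectation).

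For the discrete component I would mirror the same three steps. The marginalization identity $\vp_t(\cdot\mid c)=\sum_{a_1}p_{t|1}(\cdot\mid a_1)\,p_1(a_1\mid c)$ combined with the conditional Kolmogorov equation $\partial_t \vp_{t|1}(\cdot\mid a_1)=\vp_{t|1}(\cdot\mid a_1)\mR_{t|1}(\cdot,\cdot\mid a_1)$ yields, coordinate-wise,
\[
\partial_t p_t(a_t\mid c)=\sum_{a_1}\sum_{b}p_t(b\mid a_1,\text{(omitted)})R_{t|1}(b,a_t\mid a_1)p_1(a_1\mid c),
\]
and after applying Bayes' rule in exactly the same way, the coefficient of $p_t(b\mid c)$ is precisely $\E_{p_{1|t}(a_1\mid b,c)}[R_{t|1}(b,a_t\mid a_1)]$. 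Matching this with $\partial_t\vp_t(\cdot\mid c)=\vp_t(\cdot\mid c)\mR_t(\cdot,\cdot\mid c)$ gives \cref{eq.guided-R}.

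The main subtlety I anticipate is justifying the differentiation under the integral/sum and, in the continuous case, arguing that the vector field satisfying the continuity equation is uniquely characterized by the conditional-expectation formula (rather than only up to a divergence-free perturbation). The authors almost certainly adopt the convention that $\vv_t(\vx_t\mid c)$ is \emph{defined} as this conditional expectation and then verify it generates the correct marginal, which sidesteps the uniqueness issue; I would follow the same convention. For the discrete side one should also check that $\mR_t(\cdot,\cdot\mid c)$ defined by \cref{eq.guided-R} is a valid rate matrix (nonnegative off-diagonal entries, rows summing to zero), which follows immediately from the same properties of $R_{t|1}(\cdot,\cdot\mid a_1)$ and linearity of expectation.
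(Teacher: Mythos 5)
Your proposal is correct and follows essentially the same route as the paper: start from the conditional Fokker--Planck (resp.\ Kolmogorov) equation, integrate against $p_{1}(\cdot\mid c)$, invoke the conditional independence $p_{t|1}(\mG_t\mid\mG_1,c)=p_{t|1}(\mG_t\mid\mG_1)$ from \cref{thm.independence} to identify the joint $p_t(\vx_t,\vx_1\mid c)$, then Bayes to recognize the conditional-expectation form of the velocity and rate matrix. The paper phrases the integration as ``taking expectation with respect to $p_{\mathrm{data}}(\vx_1\mid c)$'' and works from the conditional FP equation outward, while you work from the guided marginal inward, but these are the same computation; your extra remarks on uniqueness of the continuity-equation vector field and on well-definedness of $\mR_t(\cdot,\cdot\mid c)$ are sensible observations that the paper leaves implicit by adopting \cref{eq.guided-v}--\cref{eq.guided-R} as the definition and verifying the marginal evolution.
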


The formal statement and proof are in \cref{app.proofs-guided-flow}. The conditional velocity $\vv_{t|1}$ and conditional rate matrix $\mR_{t|1}$ for Multiflow are defined in \cref{eq.conditional-velocity-rate-matrix}. Thus to simulate the guided flow $p_{t}(\mG_t|c)$, it suffices to sample from $p_{1|t}(\cdot|\mG_t,c)$ so that we can estimate the velocity and rate matrix. 
In the subsequent subsections, we present our sampling methods for the discrete and continuous parts of the flow.




\subsection{Discrete guidance: addressing curse of dimensionality}
\label{sec.method-discrete}

Recall that the learned Multiflow model can output $p_{1|t}(\va_1|\mG_t)$ for $\va_1\in\gA^n$. Furthermore, one can show via \cref{thm.independence} that $p_{1|t}(\va_1|\mG_t,c)\propto p_{1|t}(\va_1|\mG_t)f_c(\mG_1)$ where $f_c(\cdot)$ is the target predictor. Thus, $R_t(a_t^{(i)}, j|c)$  can be exactly calculated based on \cref{eq.guided-R}. However, we note that this is computationally intractable because of the curse of dimensionality: The complexity of exactly calculating the expectation is $\gO(|\gA|^n)$, growing exponentially with the number of atoms $n$.

Therefore, it is natural to resort to Monte-Carlo approaches to estimate $R_t(a_t^{(i)}, b|c)$. One straightforward idea is to consider importance sampling. We show in \cref{prop.importance-sampling-guided} that the guided rate matrix defined in \cref{theorem.guided_flow} satisfies
\begin{equation}
\label{eq.naive-importance-sampling}
    R_t(a_t^{(i)}, b|c)  =\E_{p_{1|t}(\mG_1
    |\mG_t)}\left[\frac{p(c|\mG_1)}{p(c|\mG_t)}R_{t|1}(a_t^{(i)},b|a_1^{(i)}) \right].
\end{equation}
However, this approach requires access to $p(c|\mG_t)$, i.e., a time-dependent target predictor. This requires additional classifier training and prohibit the use of off-the-shelf predictors. Thus, importance sampling is not directly applicable and we build our method based on the following observation:
\begin{proposition}
\label{prop.sample}
    For $t\in[0,1)$, the guided rate matrix defined in \cref{theorem.guided_flow} satisfies
    \begin{equation}
    R_t(a_t^{(i)}, b|c) = 
    \frac{\E_{p_{1|t}(\mG_1|\mG_t)}\left[f_{c}(\mG_1) R_{t|1}(a_t^{(i)},b|a_1^{(i)}) \right]}{\E_{p_{1|t}(\mG_1|\mG_t)}\left[f_{c}(\mG_1)\right]}.
    \end{equation}
\end{proposition}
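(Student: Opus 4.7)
The plan is to derive the stated identity by starting from the characterization of the guided rate matrix in \cref{theorem.guided_flow} and applying Bayes' rule to rewrite the expectation over the $c$-conditional posterior $p_{1|t}(\mG_1 \mid \mG_t, c)$ as a reweighted expectation under the unguided posterior $p_{1|t}(\mG_1 \mid \mG_t)$. The crucial input is the conditional-independence property established in \cref{thm.independence}, which allows collapsing $p(c \mid \mG_1, \mG_t)$ to the target predictor $f_c(\mG_1)$.

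Concretely, I would first invoke the discrete guided rate matrix formula from \cref{theorem.guided_flow} to write
$$R_t(a_t^{(i)}, b \mid c) = \E_{p_{1|t}(\mG_1 \mid \mG_t, c)}\!\left[R_{t|1}(a_t^{(i)}, b \mid a_1^{(i)})\right].$$
Next, I would apply Bayes' rule to obtain $p_{1|t}(\mG_1 \mid \mG_t, c) = p(c \mid \mG_1, \mG_t)\, p_{1|t}(\mG_1 \mid \mG_t)/p(c \mid \mG_t)$, use the conditional independence of $\mG_t$ and $c$ given $\mG_1$ from \cref{thm.independence} to reduce $p(c \mid \mG_1, \mG_t) = p(c \mid \mG_1) = f_c(\mG_1)$, and express the normalizing constant as $p(c \mid \mG_t) = \E_{p_{1|t}(\mG_1 \mid \mG_t)}[f_c(\mG_1)]$ by marginalizing $\mG_1$.

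Substituting these into the expectation and pulling the $\mG_t$-measurable denominator outside produces the desired ratio. The assumption $t \in [0,1)$ is needed because $R_{t|1}$ carries a factor of $1/(1-t)$ in \cref{eq.conditional-velocity-rate-matrix} and because $p_{1|t}(\mG_1 \mid \mG_t)$ must remain a proper conditional density before the flow terminates.

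I do not anticipate any hard step: the proof is essentially a one-line Bayes' rule manipulation once the conditional independence is in hand. The only subtle point — and the unique place where \cref{thm.independence} is truly used — is the removal of $\mG_t$ from the conditioning in $p(c \mid \mG_1, \mG_t)$. A routine Fubini-type exchange of expectation with the finite sum over target states $b$ is harmless provided $f_c$ is bounded, which is the practically relevant case for any off-the-shelf predictor.
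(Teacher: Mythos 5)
Your proof is correct and follows essentially the same path as the paper: the paper first establishes the importance-sampling identity (\cref{prop.importance-sampling-guided}) via the change of measure $p_{1|t}(\mG_1|\mG_t,c)/p_{1|t}(\mG_1|\mG_t)=p(c|\mG_1)/p(c|\mG_t)$ (which is precisely your Bayes-plus-conditional-independence step), and then expands the normalizer $p(c|\mG_t)=\E_{p_{1|t}(\mG_1|\mG_t)}[f_c(\mG_1)]$ by marginalization using the same independence — both ingredients you identify. The only cosmetic difference is that the paper splits these two moves into a separate intermediate proposition, whereas you perform them in one pass; there is no substantive difference in the argument.
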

\Cref{prop.sample} indicates that we can generate $K$ samples from $p_{1|t}(\mG_1|\mG_t)$ and obtain a good estimation of the guided rate matrix based on the known conditional rate matrix $R_{t|1}(a_t^{(i)},b|a_1^{(i)})$ and target predictor $f_{c}(\mG_1)$. We conduct theoretical analysis on the approximation error of this method in \cref{thm.discrete-error-bound}. We show that $K=\mathcal{O}\left(\log (n|\gA|)\right)$ samples suffice to accurately approximate $R_t(a_t^{(i)}, b|c)$ for any $i\in\{1, \cdots, n\}$ and $b\in\gA$ with high probability. This is in sharp contrast to the $\gO(|\gA|^n)$ complexity of exact expectation calculation, avoiding the curse-of-dimensionality issue. Furthermore, our method does not require access to unknown quantities such as $p(c|\mG_t)$ in \cref{eq.naive-importance-sampling}. 
We show in the experiments that this approach can effectively simulate the discrete component of the multimodal guided flow.

\begin{theorem}
\label{thm.discrete-error-bound}
    Let $\mG_{1|t, 1}, \cdots, \mG_{1|t, K} \sim \mathrm{i.i.d.}~p_{1|t}(\cdot|\mG_t)$. Define the estimation of $R_t(a_t^{(i)}, b|c)$ as 
    \begin{equation}
        \hat R_t(a_t^{(i)}, b|c)=\sum_{k=1}^K f_c(\mG_{1|t, k})R_{t|1}(a_t^{(i)},b\mid a_{1|t, k}^{(i)}) \bigg/ \sum_{k=1}^K f_c(\mG_{1|t, k}).
        \label{eq.discrete-guidance}
    \end{equation}
    Assume $\underline{f_c} = \inf\limits_{\mG\in\gG} f_c(\mG)>0$. Given any $\varepsilon\in (0, \underline{f_c}/2)$, $\delta\in(0,1)$, if $K=\Theta\left(\frac{1}{\varepsilon^2}\log \frac{n|\gA|}{\delta}\right)$, then
    \begin{equation}
    \sP \left(
        \sup_{i\in\{1, \cdots, n\}, ~b\in\gA} \left|\hat R_t(a_t^{(i)}, b|c) - R_t(a_t^{(i)}, b|c)\right|<\varepsilon 
         \right) \geq 1-\delta
    \end{equation}
\end{theorem}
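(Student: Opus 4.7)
The plan is to recognize the estimator in equation~\eqref{eq.discrete-guidance} as a self-normalized importance sampling estimator and reduce the problem to standard Hoeffding concentration on the numerator and denominator sums separately. By \cref{prop.sample}, the target is the ratio $R_t(a_t^{(i)},b|c) = \mu^{(i,b)}/\nu$, where $\mu^{(i,b)} := \mathbb E_{p_{1|t}(\mG_1|\mG_t)}[f_c(\mG_1)\,R_{t|1}(a_t^{(i)}, b|a_1^{(i)})]$ and $\nu := \mathbb E_{p_{1|t}(\mG_1|\mG_t)}[f_c(\mG_1)]$. The estimator $\hat R_t(a_t^{(i)},b|c)$ is exactly $\hat\mu^{(i,b)}/\hat\nu$, where the hats denote sample averages over the $K$ i.i.d.\ draws $\mG_{1|t,1},\dots,\mG_{1|t,K}$. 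So it suffices to concentrate $\hat\mu^{(i,b)}$ around $\mu^{(i,b)}$ uniformly in $(i,b)$, concentrate $\hat\nu$ around $\nu$, and then translate the joint bound into a ratio bound.

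The first step is Hoeffding's inequality. Since $f_c(\mG_1) = p(c|\mG_1) \in [\underline{f_c},1]$ and equation~\eqref{eq.conditional-velocity-rate-matrix} gives $R_{t|1}(a_t^{(i)},b|a_1^{(i)}) \in [0, 1/(1-t)]$, each summand in $\hat\mu^{(i,b)}$ lies in $[0, 1/(1-t)]$ and each summand in $\hat\nu$ lies in $[\underline{f_c}, 1]$. Hoeffding's inequality then yields, for any $s>0$ and fixed $(i,b)$, $\Pr(|\hat\mu^{(i,b)}-\mu^{(i,b)}|\geq s) \leq 2\exp(-2K(1-t)^2 s^2)$ and $\Pr(|\hat\nu-\nu|\geq s) \leq 2\exp(-2Ks^2)$. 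Taking a union bound over the $n|\gA|$ choices of $(i,b)$ for the numerator (the denominator is shared across all $(i,b)$ and does not require its own union bound), there exists an absolute constant $C>0$ such that choosing $K = \Theta(\varepsilon^{-2}\log(n|\gA|/\delta))$ makes both $\sup_{i,b}|\hat\mu^{(i,b)}-\mu^{(i,b)}|\leq s_0$ and $|\hat\nu-\nu|\leq s_0$ hold simultaneously with probability at least $1-\delta$, for some $s_0 \leq \varepsilon\,\underline{f_c}/C$.

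The second step is to convert the joint concentration into a ratio bound via the identity
$$\left|\frac{\hat\mu^{(i,b)}}{\hat\nu} - \frac{\mu^{(i,b)}}{\nu}\right| \;\leq\; \frac{|\hat\mu^{(i,b)}-\mu^{(i,b)}|}{\hat\nu} \;+\; R_t(a_t^{(i)},b|c)\cdot\frac{|\hat\nu-\nu|}{\hat\nu}.$$
On the good event, the assumption $\varepsilon < \underline{f_c}/2$ together with the choice of $s_0$ guarantees $\hat\nu \geq \nu - s_0 \geq \underline{f_c}/2$, so the denominator is bounded away from zero. Both right-hand-side terms are then $O(s_0/\underline{f_c})$, and tuning $C$ makes the total at most $\varepsilon$ uniformly in $(i,b)$.

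The main obstacle I anticipate is essentially bookkeeping: propagating the $1/(1-t)$ factor from the rate matrix through Hoeffding's constant, verifying that $\varepsilon<\underline{f_c}/2$ really does keep $\hat\nu\geq \underline{f_c}/2$ on the good event (otherwise the ratio bound blows up), and confirming that the union-bound cost is logarithmic in $n|\gA|$ rather than exponential in $n$, which is the whole point of this Monte-Carlo estimator versus the naive exact calculation. These dependencies on $t$ and $\underline{f_c}$ get absorbed into the $\Theta(\cdot)$ in the statement, and nothing in the argument beyond the ratio decomposition is non-routine.
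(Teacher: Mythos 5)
Your proposal is correct and follows essentially the same approach as the paper's proof: Hoeffding concentration on the numerator and denominator sample averages separately, a union bound over $(i,b)$, and an algebraic ratio decomposition combined with the lower bound $\hat\nu \geq \underline{f_c}/2$ (guaranteed by $\varepsilon < \underline{f_c}/2$) to avoid blow-up. Your ratio decomposition $\bigl|\hat\mu/\hat\nu - \mu/\nu\bigr| \leq |\hat\mu-\mu|/\hat\nu + R\cdot|\hat\nu-\nu|/\hat\nu$ and your observation that the denominator is shared (so needs no union bound of its own) are minor cosmetic variants of what the paper does, with the same $(1-t)$- and $\underline{f_c}$-dependence absorbed into the $\Theta(\cdot)$.
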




\subsection{Continuous guidance: building equivariant guided ODE}
\label{sec.method-continuous}


\rebuttal{For the continuous component, recall that the learned Multiflow model predicts the expectation $\mathbb{E}_{1|t}[\mX_1|\mG_t]$ given $\mG_t=(\mX_t, \va_t)$ as the input, while we need $\mathbb{E}_{1|t}[\mX_1|\mG_t,c]$ to estimate the velocity for the guided flow as demonstrated by \cref{theorem.guided_flow}. To bridge the gap, our method aims to generate $\hat\mG_t=(\hat\mX_t, \va_t)$ such that $p_{1|t}(\cdot\mid\hat\mG_t) \approx p_{1|t}(\cdot\mid\mG_t, c)$, so that we can generate $\mX_{1|t}$ from the given flow model and estimate the velocity.

Intuitively, $\hat \mX_t$ needs to contain more information on the target $c$ to impose the guidance on $\mX_{1|t}$. 
Motivated by training-free guidance methods developed for continuous diffusion models~\citep{ugd,mpgd,tfg}, we employ a gradient ascent strategy to align the modeled expectation of \(\mX_{1|t}\) with the target $c$. The gradient is then propagated to \(\mX_t\), enabling us to derive the desired \(\hat{\mX_t}\). Specifically, we simulate the following iterative process:
\begin{equation}
\label{eq.variance-guidance}
    \mX_t \leftarrow \mX_t + \rho_t \nabla_{\mX_t} \log f(\E[\mX_1|\mX_t, \va_t]),
\end{equation}
where $\E[\mX_1|\mX_t, \va_t]$ is approximated by the flow model, and $\rho_t$ is a hyperparameter to control the strength of guidance.
We run the update for $N_{\mathrm{iter}}$ steps to obtain $\hat\mG_t=(\hat\mX_t, \va_t)$. Then we can estimate the guided velocity based on \cref{theorem.guided_flow} and simulate the Fokker-Planck Equation of the guided flow.}

Additionally, we note that the continuous variables in our application represents coordinates in the 3D space. Translations and rotations in the 3D space should not change the likelihood of the variable. To ensure this property, we adopt two techniques:
(a) As discussed in \cref{sec:preliminary-se3}, we ensure that coordinates after translations are mapped to the same molecular representations by projecting the coordinates to the simplex $\Gamma=\{\mX \mid \vx^{(1)} + \cdots + \vx^{(n)}=\vzero \}$. When generating $\hat\mX_t$ via \cref{eq.variance-guidance} in our continuous guidance algorithm, we implement the following update:
\begin{equation}
\label{eq.projected-variance-guidance}
    \mX_t \leftarrow \mathrm{Project}_{\Gamma}\left(\mX_t + \rho_t \nabla_{\mX_t} \log f(\E[\mX_1|\mX_t, \va_t])\right).
\end{equation}
(b) For rotation transformations, the flow $\{p_t(\mX_t, \va_t|c)\}$ needs to be $\mathrm{SO}(3)$-invariant \textit{w.r.t.} $\mX_t$, as defined in \cref{sec:preliminary-se3}. We use Equivariant Graph Neural Networks (EGNN) as the backbone for the model $g_{\theta}$ so that any $\mathrm{SO}(3)$ transformation on $\mX_t$ will lead to the same transformation on $\mX_{1|t}\sim p_{1|t}(\cdot|\mG_t)$, i.e., the model ensures equivariant sampling \citep{satorras2021n}. We mathematically show that our techniques guarantees $\mathrm{SO}(3)$-invariance of the simulated flow:

\begin{theorem}[$\mathrm{SO}(3)$-invariance (\textit{proof in~\cref{app.proof-independence}})]
    Assume the target predictor $f_c(\mG)$ is $\mathrm{SO}(3)$-invariant, the model $g_{\theta}(\mG)$ is $\mathrm{SO}(3)$-equivariant, and the distribution of $\mX_0$ is $\mathrm{SO}(3)$-invariant, then for any $t\in[0,1]$, $p_t(\mX_t|c)$ in the simulated multimodal guided flow is $\mathrm{SO}(3)$-invariant.
\label{thm.invariance}
\end{theorem}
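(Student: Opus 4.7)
The plan is an induction on the discrete time grid $0 = t_0 < t_1 < \cdots < t_N = 1$ used by the Euler/Kolmogorov simulation. My induction hypothesis at step $k$ is that the joint law $p_{t_k}(\mX_{t_k}, \va_{t_k}\mid c)$ is $\mathrm{SO}(3)$-invariant in $\mX_{t_k}$ for every fixed value of $\va_{t_k}$; marginalizing out $\va_{t_k}$ then delivers the claimed invariance of $p_{t_k}(\mX_{t_k}\mid c)$. The base case is immediate: $\mX_0$ is drawn from an $\mathrm{SO}(3)$-invariant distribution by assumption, the discrete initialization $\va_0 \equiv \texttt{[M]}$ is deterministic, and neither depends on $c$.

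The central tool I would isolate first is a short lemma: if $h:\R^{3\times n}\to\R$ is differentiable and $\mathrm{SO}(3)$-invariant, then $(\nabla h)(\mS\mX)=\mS\,(\nabla h)(\mX)$ for every $\mS\in\mathrm{SO}(3)$. This falls out of the chain rule applied to $h(\mS\mX)=h(\mX)$ together with $\mS^{-1}=\mS^\top$. I would apply it to $h_{\va}(\mX):=\log f_c(\E[\mX_1\mid\mX,\va])$: equivariance of the EGNN-based model $g_\theta$ yields $\E[\mX_1\mid\mS\mX,\va]=\mS\,\E[\mX_1\mid\mX,\va]$, and the assumed $\mathrm{SO}(3)$-invariance of $f_c$ then cancels the $\mS$, so $h_{\va}$ is itself $\mathrm{SO}(3)$-invariant. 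Combined with the manifest equivariance of the mean-subtraction $\mathrm{Project}_\Gamma$, the lemma shows one iteration of \cref{eq.projected-variance-guidance} is $\mathrm{SO}(3)$-equivariant, and iterating it $N_{\mathrm{iter}}$ times preserves equivariance, so $\mX_{t_k}\mapsto\hat\mX_{t_k}$ is $\mathrm{SO}(3)$-equivariant.

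For the inductive step I would factor one time step into: (i) the gradient-guidance iteration producing $\hat\mX_{t_k}$; (ii) the Monte Carlo draws $\mG_{1\mid t_k,j}\sim p_{1\mid t_k}(\cdot\mid\hat\mX_{t_k},\va_{t_k})$ via $g_\theta$; (iii) the estimators $\hat\vv_{t_k}$ and $\hat R_{t_k}$ built from \cref{eq.guided-v,eq.discrete-guidance}; and (iv) the Euler/Kolmogorov update. Step (i) is handled above. In (ii), EGNN equivariance transports the rotation to the sampled coordinates, while the categorical type-logits—being $\mathrm{SO}(3)$-invariant features—leave the sampled $a_{1\mid t_k,j}^{(i)}$ untouched. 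In (iii) the conditional velocity $(\mX_1-\hat\mX_t)/(1-t)$ is equivariant, so the averaged $\hat\vv_{t_k}$ is equivariant, while each weight $f_c(\mG_{1\mid t_k,j})$ is invariant and $R_{t\mid 1}$ depends only on discrete types, so $\hat R_{t_k}$ is invariant under rotations of $\mX$. In (iv), the Euler update $\mX_{t_{k+1}}=\mX_{t_k}+\Delta t\,\hat\vv_{t_k}$ preserves equivariance, and the categorical jump drawn from $\hat R_{t_k}$ has a law unchanged by coordinate rotations. Coupling the two rotated chains with the same Monte Carlo and jump randomness then produces a pathwise identity $\mX_{t_{k+1}}(\mS\mX_{t_k})=\mS\,\mX_{t_{k+1}}(\mX_{t_k})$ together with an in-law invariance of $\va_{t_{k+1}}$, closing the induction.

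The main obstacle is bookkeeping rather than conceptual: one must carefully couple the Monte Carlo samples and the Gillespie/Euler randomness so that the equivariance argument becomes pathwise across the rotated and unrotated chains rather than only distributional. The gradient lemma is the only non-routine algebraic step; once it is combined with EGNN equivariance on coordinates, EGNN invariance on type predictions, and $\mathrm{SO}(3)$-invariance of $f_c$, the induction goes through mechanically.
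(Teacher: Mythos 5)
Your proposal is correct and follows essentially the same route as the paper: induction over the discrete Euler time grid, the lemma that the gradient of an $\mathrm{SO}(3)$-invariant function is $\mathrm{SO}(3)$-equivariant (the paper's Lemma on gradients of invariant mappings), equivariance of each algorithmic substep (projection, gradient guidance, Euler update), and a push-forward argument to conclude invariance at the next step. Your explicit coupling of the Monte Carlo draws and jump randomness across the rotated and unrotated chains, and your use of the joint law of $(\mX_{t_k},\va_{t_k})$ as the induction hypothesis, are careful elaborations of what the paper's push-forward lemma treats somewhat tersely, but they do not change the underlying argument.
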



\subsection{\method: Putting them all together}

We combine the techniques of guiding the discrete component from \cref{sec.method-discrete} and the continuous component from \cref{sec.method-continuous} into our proposed \method, which is illustrated in \Cref{fig:main}(b) and presented as pseudo code in \cref{alg:unified}. Specifically, \method guides the molecules $\mG_t$ at each time step. For the discrete component $\va_t$, it samples $K$ Monte Carlo samples to estimate the guided rate matrix $R(a_t^{(i)}, a_{t+\Delta t}^{(i)}|c)$ using \cref{eq.discrete-guidance}. For the continuous component $\vx_t$, it simulates the ODE in the simplex $\Gamma$ for $N_{\mathrm{iter}}$ iterations with \cref{eq.variance-guidance} and obtain the guided velocity $\vv_t(\vx_t^{(i)}|c)$.

From an implementation standpoint, we also introduce a temperature coefficient $\tau$ to adjust the strength of discrete guidance by controlling the uncertainty of the target predictor $f_c(\mG)=\softmax(f(\mG;\psi)/\tau)$, where $f(\cdot;\psi)$ is typically an EGNN parametrized by $\psi$ trained to classify or regress the property $c$ of a molecule $\mG$. In the experiments, we set $K = 512$ and $N_{\mathrm{iter}} = 4$ to balance computational efficiency, and tune $\rho, \tau$ via grid search (see~\cref{app:experimental_details} for details).\footnote{For simplicity, we did not schedule $\rho$ and $\tau$ (e.g., by increasing or decreasing them over time).}


\section{Experiments}

In this paper, we explore the application of \method across four types of guidance targets: single quantum property, combined quantum properties, structural similarity, and target-aware drug design quality. Quantum properties are examined using QM9 dataset~\citep{ramakrishnan2014quantum}, while structural similarity is assessed on both QM9 and the larger GEOM-Drug dataset~\citep{axelrod2022geom}. The target-aware drug design quality is tested using 
   CrossDocked2020 dataset~\citep{crossdocked}.  The baseline implementation and datasets details are in~\cref{app:experimental_details}. 

\subsection{Quantum property guidance}

\paragraph{Dataset and models.} We follow the inverse molecular design literature~\citep{bao2022equivariant, hoogeboom2022equivariant} to establish this benchmark. The QM9 dataset is split into training, validation, and test sets, comprising 100K, 18K, and 13K samples, respectively. The training set is further divided into two non-overlapping halves. To prevent reward hacking, we use the first half to train a property prediction network for guidance and an unconditional flow model, while the second half is used to train another property prediction network that serves as the ground truth oracle, providing labels for MAE computation. All three networks share the same architecture as defined by EDM, with an EGNN as the backbone. In inference, we use 100 Euler sampling steps for the flow model.

\begin{table}[t]
\centering
\caption{How generated molecules align with the target quantum property on QM9 dataset. The results for Upper bound, \#Atoms, Lower bound, Cond-EDM, EEGSDE are copied from~\citet{bao2022equivariant}, and the results for training-free methods for diffusion are copied from~\citet{tfg}. The results of Cond-Flow and \method are averaged over 3 seeds, with detailed results reported in~\cref{app:experimental_results}. \rebuttal{Among all training-based or training-free methods, the best MAE for the same target is in \textbf{bold}, while the second best MAE is \underline{underlined}.} }
\resizebox{\textwidth}{!}{
\begin{tabular}{@{}ccccccccc@{}} 
\toprule
\textbf{Baseline category} & \textbf{Method} & \textbf{Model category} & $C_v$  & $\mu$ & $\alpha$  & $\Delta \varepsilon$  & $\varepsilon_{\HOMO}$  & $\varepsilon_{\LUMO}$ \\
\midrule
\multirow{3}{*}{\begin{tabular}[c]{@{}c@{}}Reference\end{tabular}} 
& Upper bound  & \multirow{3}{*}{$\backslash$}  & 6.87          & 1.61          & 8.98           & 1464           & 645            & 1457           \\
& \#Atoms   &      & 1.97          & 1.05          & 3.86           & 886            & 426            & 813            \\
& Lower bound  &    & 0.040        & 0.043         & 0.09          & 65            & 39            & 36            \\
\midrule
\multirow{4}{*}{\begin{tabular}[c]{@{}c@{}}Training-based\end{tabular}} 
& Cond-EDM    & \multirow{3}{*}{\begin{tabular}[c]{@{}c@{}}Continuous Diffusion\end{tabular}}  &  {1.065}       &     1.123      & {2.78}          & {671}          & {371}           & {601}                     \\
& EEGSDE    &         & \underline{0.941}          & \textbf{0.777}           & \underline{2.50}           & \textbf{487}           & \underline{302}            & \textbf{447}           \\
& UniGEM    &         & \textbf{0.873}          & \underline{0.805}           & \textbf{2.22}           & \underline{511}           & \textbf{233}            & \underline{592}           \\
& Cond-Flow  & Multimodal Flow    & 1.52          & {0.962}         & 3.10           & 805            & 435            & 693            \\
\midrule
\multirow{7}{*}{\begin{tabular}[c]{@{}c@{}}Training-free\end{tabular}}
& DPS    &\multirow{6}{*}{\begin{tabular}[c]{@{}c@{}}Continuous\\Diffusion\end{tabular}}         & 5.26          & 63.2          & 51169          & 1380           & 744            & NA             \\
& LGD    &        & 3.77          & 1.51          & 7.15           & 1190           & 664            & 1200           \\
& FreeDoM  &       & 2.84          & 1.35          & 5.92           & 1170           & 623            & 1160           \\
& MPGD    &        & 2.86          & 1.51          & 4.26           & 1070           & \underline{554} & 1060           \\
& UGD   &          & 3.02          & 1.56          & 5.45           & 1150           & 582            & 1270           \\
& TFG    &         & \underline{2.77} & \underline{1.33} & \underline{3.90} & \underline{893}   & 568            & \underline{984}   \\
& TFG-Flow  & Multimodal Flow    & \textbf{1.75} & \textbf{0.817} & \textbf{2.32}   & \textbf{804} & \textbf{364}   & \textbf{941} \\
\bottomrule
\end{tabular}}
\label{tab:qm9_single}
\end{table}
\begin{table}[t]
\centering
\vspace{-1em}
\caption{How generated molecules align with the target quantum property. Our training-free guidance \method significantly outperforms the conditional flow (Cond-Flow) which requires condition-labelled data for training.}
\label{tab:qm9_multiple}
\resizebox{\textwidth}{!}{
\begin{tabular}{lllllllll} 
\toprule
Method & MAE1$\downarrow$ & MAE2$\downarrow$ & Method & MAE1$\downarrow$ & MAE2$\downarrow$& Method & MAE1$\downarrow$ & MAE2$\downarrow$ \\
\cmidrule(lr){1-3} \cmidrule(lr){4-6} \cmidrule(lr){7-9}
\multicolumn{3}{c}{$C_v$ (\ucv), $\mu$ (D)} &   \multicolumn{3}{c}{$\Delta\varepsilon$ (meV), $\mu$ (D)} & \multicolumn{3}{c}{$\alpha$ (\ualpha), $\mu$ (D)}\\ 
\cmidrule(lr){1-3} \cmidrule(lr){4-6} \cmidrule(lr){7-9}
Cond-Flow & $\underline{\text{4.96}}^{\pm\text{0.07}}$ &$\underline{\text{1.57}}^{\pm\text{0.01}}$  & Cond-Flow & $\underline{\text{53.5}}^{\pm\text{0.71}}$ &$\underline{\text{1.57}}^{\pm\text{0.00}}$ & Cond-Flow &  $\underline{\text{9.33}}^{\pm\text{0.01}}$ &$\underline{\text{1.54}}^{\pm\text{0.01}}$ \\
\method  & \textbf{2.36}$^{\pm0.01}$ & \textbf{1.13}$^{\pm0.04}$ & \method  & $\textbf{46.4}^{\pm\text{0.016}}$ &$\textbf{\text{0.853}}^{\pm\text{0.05}}$  & \method  & $\textbf{4.62}^{\pm\text{0.01}}$&${\textbf{1.24}}^{\pm\text{0.05}}$  \\
\bottomrule
\end{tabular}}
\vspace{-0.5em}
\end{table}

\paragraph{Guidance target.} We study guided generation of molecules on 6 quantum mechanics properties, including polarizability $\alpha$ (\ualpha), dipole moment $\mu$ (D), heat capacity $C_v$ (\ucv), highest orbital energy $\epsilon_{\text{HOMO}}$ (meV), lowest orbital energy $\epsilon_{\text{LUMO}}$ (meV) and their gap $\Delta \epsilon$ (meV). Denote the property prediction network as $\mathcal E$, then our guidance target is given by an energy function $f(\mG):=\exp(-\|\mathcal E(\mG)-c\|_2^2)$, where $c$ is the target property value. For combined properties, we combine the energy functions linearly with equal weights following~\citet{bao2022equivariant} and~\citet{tfg}.

\paragraph{Evaluation metrics.} We use the Mean Absolute Error (MAE) to measure guidance performance. We generate 4,096 molecules for each property to perform the evaluation following~\citet{tfg}. For completeness, we also report more metrics such as validity, novelty and stability in~\cref{app:network_results}.

\paragraph{Baselines.} As the training-free guidance for multimodal flow is not studied before, the most direct baselines are training-free guidance methods in continuous diffusion. We compare \method with DPS~\citep{dps}, LGD~\citep{lgd}, FreeDoM~\citep{freedom}, MPGD~\citep{mpgd}, UGD~\citep{ugd}, and TFG~\citep{tfg} which treat both atom types and coordinates as continuous variables and perform guidance using gradient information. We also compare \method with training-based baselines, such as the conditional diffusion model (Cond-EDM~\citep{hoogeboom2022equivariant}) and conditional multimodal flow (Cond-Flow), EEGSDE~\citep{bao2022equivariant} and UniGEM~\citep{feng2024unigem}, which apply both training-based guidance and conditional training. 
We also provide referential baselines following~\citet{hoogeboom2022equivariant} (\cref{app:baseline_impelementation}).

\paragraph{Results analysis.} The experimental results for single and multiple quantum properties are shown in~\cref{tab:qm9_single,tab:qm9_multiple}, respectively. First of all, our \method exhibits the best guidance performance among all training-free guidance methods, with an average relative improvement of +20.3\% over the best training-free guidance method TFG in continuous diffusion. Also, we note that \method is comparable with Cond-Flow on single property guidance and outperforms Cond-Flow significantly on multiple property guidance, while Cond-Flow requires conditional training. These results justify that the discrete guidance is more effective for the discrete variables in nature. It is also noteworthy that all the training-free guidance methods underperform EEGSDE by a large margin, suggesting a large room for improvement in training-free guidance.

\subsection{Structure guidance}


\paragraph{Guidance target.} Following \citet{gebauer2022inverse}, we represent the structural information of a molecule using its molecular fingerprint. This fingerprint, denoted as $c = (c_1, \dots, c_L)$, consists of a sequence of bits that indicate the presence or absence of specific substructures within the molecule. Each substructure corresponds to a particular position $l$ in the bit vector, with the bit $c_l$ set to 1 if the substructure is present in the molecule and 0 if it is absent. To guide the generation of molecules with a desired structure (encoded by the fingerprint $c$), we define the guidance target as $f(\mG):= \exp(-\|\mathcal E(\mG)-c\|^2)$. Here, $\mathcal E$ refers to a multi-label classifier trained using binary cross-entropy loss to predict the molecular fingerprint, as detailed in~\cref{app:network_results}.

\paragraph{Evaluation metrics.} We use Tanimoto coefficient~\citep{bajusz2015tanimoto} $\mathrm{TC}(c_1,c_2)=\frac{|c_1\cap c_2|}{|c_1\cup c_2|}$ to measure the similarity between the fingerprint $c_1$ of generated molecule and the target fingerprint $c_2$.

\paragraph{Results analysis.} The results are shown in~\cref{tab:structure}. \method improves the similarity of unconditional generative model by 76.83\% and 22.35\% on QM9 and GEOM-Drug, respectively. But we still note that the Tanimoto similarity of 0.290 and 0.208 are not satisfactory for structure guidance. We will make efforts to improve training-free guidance for better performance on this task in the future.

\begin{table}[t]
\vspace{-1em}
\centering
\begin{minipage}{0.45\linewidth}
\centering
\caption{How generated molecules align with the target structure. The Tanimoto similarity results are averaged over 3 random seeds.}
\resizebox{\linewidth}{!}{
\begin{tabular}{ccc}
\toprule
\textbf{Dataset}           & \textbf{Baseline} & \textbf{Similarity}$\uparrow$  \\ 
\midrule
\multirow{3}{*}{QM9}       & Upper bound       &  0.164$^{\pm0.004}$                    \\
                           & TFG-Flow          & 0.290$^{\pm0.007}$                     \\
                           & \emph{Rel. improvement}   & {\color{google_red}+76.83\%}                  \\ 
\midrule
\multirow{3}{*}{GEOM-Drug} & Upper bound       & 0.170$^{\pm0.001}$                     \\
                           & TFG-Flow          &  0.208$^{\pm0.002}$                    \\
                           & \emph{Rel. improvement}   & {\color{google_red}+22.35\%}                     \\
\bottomrule
\end{tabular}}
\label{tab:structure}
\end{minipage}
\hfill
\begin{minipage}{0.51\linewidth}
\centering
\caption{Evaluation of generated molecules for 100 protein pockets of CrossDocked2020 test set. The results of AR, Pocket2Mol, TargetDiff are from~\citet{targetdiff}.}
\resizebox{\linewidth}{!}{
\begin{tabular}{cccc}
\toprule
 & \textbf{Vina score$\downarrow$} & \textbf{QED$\uparrow$} & \textbf{SA}$\uparrow$  \\
 \midrule
 Test set & $-$6.87$^{\pm2.30}$ & 0.48$^{\pm0.20}$ & 0.73$^{\pm0.14}$ \\
 AR & $-$6.20$^{\pm1.25}$ & 0.50$^{\pm0.11}$ & 0.67$^{\pm0.09}$ \\
 Pocket2Mol & $-$7.23$^{\pm2.04}$ & \textbf{0.57}$^{\pm0.09}$ & \textbf{0.75}$^{\pm0.07}$  \\
 TargetDiff & $-$7.32$^{\pm2.08}$ & 0.48$^{\pm0.15}$ & 0.61$^{\pm0.11}$ \\
 Multiflow & $-$7.01$^{\pm1.81}$ & 0.45$^{\pm0.21}$ & 0.61$^{\pm0.10}$ \\
 \method & \textbf{$-$7.65}$^{\pm1.89}$ & 0.47$^{\pm0.19}$ & 0.64$^{\pm0.10}$ \\
\bottomrule
\end{tabular}}
\label{tab:pocket}
\end{minipage}
\vspace{-1.em}
\end{table}

\subsection{Pocket-targeted drug design}

We also introduce a novel benchmark for training-free guidance. In practical drug design, the goal is typically to create drugs that can bind to a specific protein target (see related work discussion in~\cref{app:related_works}), making the inclusion of pocket targets a more realistic setting for guided generation. To enhance the effectiveness of drug design, we aim for the generated molecules to exhibit strong drug-like properties, demonstrate high binding affinity to the target pocket, and be easily synthesizable. We integrate these criteria into our \method to guide the drug design.

\paragraph{Datasets and Models.} We utilize CrossDocked2020 training set to train both the unconditional flow model and the drug quality prediction network. Unlike QM9 and GEOM-Drug, the input graph $\mG$ to the flow model includes not only the coordinates and atom types of molecules but also the protein pocket, which remains fixed during message passing in the EGNN. Also, there is no need to train an oracle target predictor, as the relevant metrics can be derived from publicly available chemistry software. Further details regarding the network and dataset are provided in~\cref{app:experimental_details}.

\paragraph{Guidance target.} We use Vina score, QED score, and SA score as the proxy of binding affinity between the molecules and the protein, the drug-likeness of a molecule, and the synthetic accessibility of a molecule, respectively. \rebuttal{We combine the three scores as a holistic evaluation of the drug quality via linear combination: $c=-0.1\times$ Vina score $+$ QED $+$ SA, and train a quality prediction network $\mathcal E(\mG)$ to approximate the value. The guidance target is given as $f(\mG):=\mathcal E(\mG)$.}

\paragraph{Evaluation metrics.} We compute Vina Score by QVina~\citep{alhossary2015fast}, SA and QED by RDKit. The metrics are averaged over 100 molecules per pocket in CrossDocked test set.

\paragraph{Baselines.} We compare \method with different state-of-the-art target-aware generative models AR~\citep{luo20213d}, Pocket2Mol~\citep{peng2022pocket2mol}, and TargetDiff~\citep{targetdiff}. We implement Multiflow for target-aware molecular generation and apply \method on it.

\paragraph{Results Analysis.} As binding affinity is the most important metric in target-aware drug design, we prioritize Vina score in the design of our quality score. The results show that \method improves the drug quality over unguided Multiflow and achieves the best Vina score among all the drug design methods. We also note that the autoregressive methods (AR, Pocket2Mol) generally possess better QED and SA results than diffusion-based methods (TargetDiff, Multiflow, \method), but with weaker binding affinity. Therefore, \method is quite effective for target-aware drug design.

\begin{figure}
    \centering
    \vspace{-1em}
    \includegraphics[width=\linewidth]{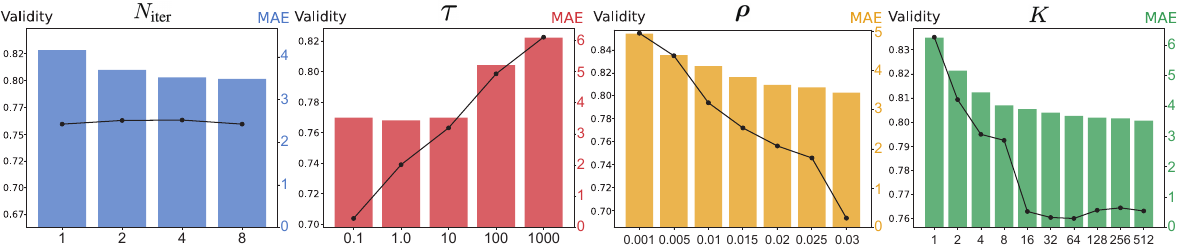}
    \caption{\textbf{Effect of varying hyperparameters ($N_{\mathrm{iter}}, \tau, \rho, K$).} The line plot indicates the validity (left $y$-axis, higher values are better), while the bar plot shows the MAE (right $y$-axis, lower values are better).}
    \label{fig:ablation}
\end{figure}

\subsection{Ablation study}

To understand how different hyper-parameters ($N_{\mathrm{iter}}, \tau, \rho, K$) affect the performance of \method, we conduct ablation study on quantum property guidance for polarizability $\alpha$. On this task, the searched $(\rho,\tau)$ is $(0.02,10)$ and recall that we set $K=512$ and $N_{\mathrm{iter}}=4$ constantly. In our ablation, we fixed all the other hyper-parameters and change one of them to a grid of values except for the study of $N_{\mathrm{iter}}$, and plot the validity (the ratio of valid molecules) and guidance accuracy (MAE) in~\Cref{fig:ablation}. For $N_{\mathrm{iter}}$, we present the results with the best $(\rho,\tau)$ for the corresponding $N_{\mathrm{iter}}$.

The study on $N_{\mathrm{iter}}$ shows $N_{\mathrm{iter}}=4$ already achieves good performance while maintaining computational efficiency, as $N_{\mathrm{iter}}=8$ don't bring significant improvement. For other experiments, we could see a positive correlation between validity and MAE as a trade-off between the quality of unconditional generation and the desired property alignment. Importantly, We note that the number of samples $K\approx 16$ is sufficient in our discrete guidance (\cref{eq.discrete-guidance}), which demonstrates the efficiency of our estimation technique with fast convergence rate. We can also learn that too strong guidance strength $\rho$ and $\tau$ may not improve the guidance (MAE) but will severely deteriorate the validity.

\section{Discussions and Limitations}

The related work is reviewed in \cref{app:related_works}. Our \method complements the trend of generative modeling through the straightforward and versatile flow matching framework~\citep{esser2024scaling}. It also unlocks the guidance for multimodal flow~\citep{campbell2024generative} and has been applied effectively to both target-agnostic and target-specific molecular design tasks. We notice that a concurrent research~\citep{sun2024rectifid} explores training-free guidance on continuous flow in image generation, however, we have identified several theoretical concerns with this approach as detailed in \cref{app:rectifID}. Overall, our \method proves to be both novel and effective, with solid theoretical foundations.

Though \method boosts the state-of-the-art training-free guidance, a performance gap persists between training-based and training-free methods (\cref{tab:qm9_single}). 
\rebuttal{But as training-free guidance allows for flexible target predictor, we replace the guidance network as a pre-trained foundation model UniMol~\citep{unimol} for~\cref{tab:qm9_single} in~\cref{app:unimol}, where the performance gap with EEGSDE is further narrowed.}
We also notice that some literature indicates that training-free guidance tends to perform well in powerful foundation models, such as Stable Diffusion \citep{tfg,ugd,freedom}. This suggests that more capable models which learn from diverse data could possibly offer better steerability. On top of that, the future trajectory for AI-Driven Drug Design might involve developing large generative foundation models and applying training-free guidance seamlessly to achieve desired properties. Beyond molecular design, our insights on multimodal guided flow are broadly applicable to other fields such as material, protein, or antibody. Given that multimodality encompasses both discrete and continuous data types, \method provides a general framework that could handle all kinds of guidance problem. We hope that \method will inspire further innovation within both the generative modeling and molecular design communities.


\clearpage

\section*{Acknowledgement}

Haowei Lin and Jianzhu Ma were supported by the National Key Plan for Scientific Research and Development of China (Grant No. 2023YFC3043300), China’s Village Science and Technology City Key Technology funding, and Wuxi Research Institute of Applied Technologies, Tsinghua University (Grant No. 20242001120).
Additionally, Haowei Lin and Yitao Liang received partial funding from the National Key Plan for Scientific Research and Development of China (Grant No. 2022ZD0160301).

\bibliography{reference}
\bibliographystyle{iclr2025_conference}

\newpage
\appendix

{
\hypersetup{linkcolor=black}
\addcontentsline{toc}{section}{Appendix} 
\renewcommand{\baselinestretch}{1.3}
\part{Appendix} 
\parttoc 
}

\newpage
\section{Pseudo code for \method}

\begin{algorithm}
\caption{Training-free Guidance for Multimodal Flow Inference}
\label{alg:unified}
\hspace*{0.02in} \textbf{Input:}  Unconditional rectified flow model $g_\theta$, target predictor $f_c$, guidance strength $\boldsymbol\rho$, temperature $\tau$, number of steps $N_{\text{iter}}$; temporal step size $\Delta t$.

\begin{algorithmic}[1]
    \State \texttt{\# Initialization}

    \State Sample $\vx_0^{(1)}, \cdots, \vx_0^{(n)}\sim \mathrm{i.i.d.}~\mathcal N(\bm 0, \bm I_{3})$, $\mX_0\leftarrow \left[\vx_0^{(1)}, \cdots, \vx_0^{(n)}\right]$
    \State $\mX_0 \leftarrow \mathrm{Project}_{\Gamma}(\mX_0)$; $\va_0 \leftarrow [\texttt{[M]}, \cdots, \texttt{[M]}]^{\top}$; $t \leftarrow 0$

    \While {$t<1$}  \Comment{Simulate Fokker-Planck and Kolmogorov Equations}



        \State $\mX_{1|t}, p(\va_{1|t}) \leftarrow g_\theta(\mX_t, \va_t)$ 

        \State

        \State \texttt{\# Discrete guidance}
            
        \State Sample $\va_{1|t, 1}, \cdots, \va_{1|t, K} \sim\mathrm{i.i.d.} ~ p(a_{1|t})$ 
            
        \For{$i=1, \cdots, n$}
        \State
            $\displaystyle{\hat R(a_t^{(i)}, b) \leftarrow \sum_{k=1}^K f(\mX_{1|t}, \va_{1|t, k})R_{t|1}(a_t^{(i)},b\mid a_{1|t, k}^{(i)}) \bigg/ \sum_{k=1}^K f(\mX_{1|t}, \va_{1|t, k})}$
        \State Sample $a_{t+\Delta t}^{(i)} \sim \mathrm{Cat}(\delta( a_{t+\Delta t}^{(i)}, a_t^{(i)}) + \hat R( a_t^{(i)}, a_{t+\Delta t}^{(i)})\Delta t)$
        \EndFor

        \State

        \State \texttt{\# Continuous guidance}
            
        \State Sample $ \va_{1|t} \sim \mathrm{Cat}(f_c(\mX_{1|t}, \va_{1|t, k}))$  \Comment{ Sample $\va_{1|t}$ from $\{\va_{1|t, k}\}_{k=1}^K$ based on $f_c$}
            
        \For{$j=1, \cdots, N_{\text{iter}}$} 
        \State
            $\displaystyle{\mX_t \leftarrow \mathrm{Project}_{\Gamma}\left(\mX_t + \rho_t \nabla_{\mX_t}  \log f(g_\theta(\mX_t, \va_t)_{\mX},  \va_{1|t})\right)}$
        \EndFor

        \State $\mX_{1|t} \leftarrow g_\theta(\mX_t, \va_t)_{\mX}$
        \For{$i=1, \cdots, n$}
        \State $\hat \vv(\vx_t^{(i)}) \leftarrow \vv_{t|1}(\vx_t^{(i)}|\vx_1^{(i)})$
        \State $\vx_{t+\Delta t}^{(i)} \leftarrow \vx_{t}^{(i)} + \hat \vv(\vx_t^{(i)})\Delta t$
        \EndFor
        \State $\mX_{t+\Delta t} \leftarrow \mathrm{Project}_{\Gamma}(\mX_{t+\Delta t})$

        \State
        \State $t = t + \Delta t$
    \EndWhile
    \State \textbf{Output:} $\mX_1,  \va_1$
\end{algorithmic}
\end{algorithm}

\newpage
\section{Omitted Mathematical Derivations}
\label{app.proofs}

In this section, we present the omitted derivation proofs for our theoretical results. Note that we always we (re)state the theorem for ease of reading, even if it has appeared in the main paper.

\subsection{Formal Statement and Proof of Theorem \ref{thm.independence}}
\label{app.proof-independence}
\begin{theorem}[Existence of the guided flow (formal version of \cref{thm.independence})]
    \label{thm.independence-formal}
    Let $\gG$ be the space of molecular representations and $\gC$ be a finite set which includes all the values of our target property. 
    Given a $\gG$-valued process $\{\mG_t\}_{t\in[0,1]}$ in a probability space $(\Omega_{\mathrm{G}}, \gF_{\mathrm{G}}, \sP_{\mathrm{G}})$, and a function $f_c (\mG_1)$ which defines a valid distribution over $\gC$ for any $\mG_1\in\gG$, there exists a joint probability measure $\sP$ on the product space $(\Omega, \gF) = (\Omega_{\mathrm{G}}\times \gC, \gF_{\mathrm{G}}\otimes 2^{\gC})$ and random variables $(\{\mG_t\}_{t\in[0,1]}, c)$ on it that satisfies the following:
    \begin{itemize}
        \item \textbf{Preservation of flow marginals:} For any $0\leq t_1 < \cdots < t_m <1$, $$\sP(\mG_{t_1}, \cdots, \mG_{t_m}) = \sP_{\mathrm{G}}(\mG_{t_1},  \cdots, \mG_{t_m}).$$
        \item \textbf{Alignment with target predictor:} For any $c\in \gC$ and $\mG_1\in\gG$, $\sP(c|\mG_1)= f_c ( \mG_1)$.
        \item \textbf{Conditional independence of trajectory and target:} For any $0\leq t_1 <  \cdots < t_m <1$, $(\mG_{t_1}, \cdots, \mG_{t_m})$ and $c$ are independent conditioning on $\mG_1$.
    \end{itemize}
\end{theorem}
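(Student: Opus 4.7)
The plan is to construct the joint measure $\sP$ explicitly via a Markov kernel attaching the target to the trajectory only through its endpoint $\mG_1$, and then verify the three properties by routine measure-theoretic computations. Since $\gC$ is finite and $f_c(\mG_1)$ sums to one in $c$ for every $\mG_1\in\gG$, the map $\omega\mapsto f_c(\mG_1(\omega))$ is measurable for each $c$, so we obtain a transition kernel $K:\Omega_{\mathrm{G}}\times 2^{\gC}\to[0,1]$ by
\[
    K(\omega, S) \,:=\, \sum_{c\in S} f_c(\mG_1(\omega)),\qquad S\subseteq \gC.
\]
Define $\sP$ on $(\Omega_{\mathrm{G}}\times\gC,\,\gF_{\mathrm{G}}\otimes 2^{\gC})$ on product rectangles by
\[
    \sP(A\times S) \,:=\, \int_A K(\omega, S)\,d\sP_{\mathrm{G}}(\omega),\qquad A\in\gF_{\mathrm{G}},\ S\subseteq\gC,
\]
and extend uniquely by the Carathéodory/Hahn extension theorem (countable additivity follows from dominated convergence applied inside the integral). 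The random variables are then the coordinate projections: $\mG_t(\omega, c'):=\mG_t(\omega)$ and $c(\omega,c'):=c'$.

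Next I would verify the three required properties in turn. For \emph{preservation of flow marginals}, for any finite-dimensional cylinder $B=\{(\mG_{t_1},\ldots,\mG_{t_m})\in E\}$ I take $A$ to be its preimage in $\gF_{\mathrm{G}}$ and $S=\gC$; since $K(\omega,\gC)=\sum_{c\in\gC}f_c(\mG_1(\omega))=1$, the integrand is identically $1$ and the joint marginal on the trajectory equals $\sP_{\mathrm{G}}(A)$. For \emph{alignment with the target predictor}, I would invoke the definition of regular conditional probability given $\mG_1$: by the tower property and the construction of $\sP$, for any bounded measurable $\varphi:\gG\to\mathbb{R}$,
\[
    \E_{\sP}\!\left[\1\{c=c_0\}\,\varphi(\mG_1)\right] \,=\, \int \varphi(\mG_1(\omega))\,f_{c_0}(\mG_1(\omega))\,d\sP_{\mathrm{G}}(\omega),
\]
which is exactly $\E_{\sP}[f_{c_0}(\mG_1)\varphi(\mG_1)]$. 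Hence $\sP(c=c_0\mid\mG_1)=f_{c_0}(\mG_1)$ almost surely.

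For \emph{conditional independence of trajectory and target given $\mG_1$}, I fix times $0\leq t_1<\cdots<t_m<1$ and any measurable rectangle $E$ in $\gG^m$ together with $c_0\in\gC$. Writing $B=\{(\mG_{t_1},\ldots,\mG_{t_m})\in E\}$ and conditioning on $\mG_1$, the construction gives
\[
    \sP(B\cap\{c=c_0\}\mid\mG_1) \,=\, \E_{\sP_{\mathrm{G}}}[\1_B\mid\mG_1]\cdot f_{c_0}(\mG_1),
\]
because $K(\omega,\{c_0\})$ is a function of $\mG_1(\omega)$ alone and therefore factors out of the conditional expectation. Combined with the previous step, this is precisely the product $\sP(B\mid\mG_1)\,\sP(c=c_0\mid\mG_1)$, giving conditional independence.

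The main obstacle I expect is technical rather than conceptual: making the ``$K(\omega,\{c\})$ depends on $\omega$ only through $\mG_1(\omega)$'' step fully rigorous at the level of sub-$\sigma$-algebras. One needs a regular conditional distribution of the trajectory $(\mG_{t_1},\ldots,\mG_{t_m})$ given $\mG_1$ under $\sP_{\mathrm{G}}$, which exists because $\gG=\R^{3\times n}\times\gA^n$ is a standard Borel space. Once that disintegration is in hand, the factorization above is an application of the $\sigma(\mG_1)$-measurability of $K(\,\cdot\,,\{c_0\})$ and the pull-out property of conditional expectation. The $\mathrm{SO}(3)$-invariance claim in Theorem~\ref{thm.invariance} then reduces to checking that each ingredient in the simulator (the initial law on $\mX_0$, the EGNN-parametrized kernel $g_\theta$, the projected gradient update \cref{eq.projected-variance-guidance}, and the invariant predictor $f_c$) commutes with the $\mathrm{SO}(3)$-action, so that invariance is preserved inductively in $t$.
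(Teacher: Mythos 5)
Your construction is correct and ends up producing the same measure the paper builds, but your route is slightly more direct. The paper first disintegrates $\sP_{\mathrm{G}}$ into regular conditional laws $\sP_{\mathrm{G}}^{\mG_1'}(\cdot)=\sP_{\mathrm{G}}(\cdot\mid\mG_1=\mG_1')$, forms the product measure $\sP_{\mathrm{G}}^{\mG_1'}\otimes\sP_{\mathrm{C}}^{\mG_1'}$ for each $\mG_1'$, and then reintegrates against the marginal law of $\mG_1$; you instead define a Markov kernel $K(\omega,S)=\sum_{c\in S}f_c(\mG_1(\omega))$ from $\Omega_{\mathrm{G}}$ to $\gC$ and push it through $\sP_{\mathrm{G}}$. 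The two constructions coincide on rectangles (a tower-property computation makes this explicit), so the resulting $\sP$ is identical. What your formulation buys is that you never need to invoke existence of a regular conditional distribution of the trajectory given $\mG_1$: the verification of conditional independence only uses that $K(\cdot,\{c_0\})=f_{c_0}(\mG_1(\cdot))$ is $\sigma(\mG_1)$-measurable, so it can be pulled out of $\E_{\sP_{\mathrm{G}}}[\,\cdot\mid\mG_1]$, which holds for any sub-$\sigma$-algebra. Your closing worry about requiring a regular conditional distribution of the trajectory is therefore unnecessary --- the standard-Borel hypothesis on $\gG$ is not needed for this particular theorem --- whereas the paper's construction does implicitly lean on it when it writes $\sP_{\mathrm{G}}(\cdot\mid\mG_1=\mG_1')$ as a measure. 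Apart from that, the three verifications match the paper's step for step.
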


\begin{proof}
    First, we construct random variables $(\{\mG_t\}_{t\in[0,1]}, c)$ on the product space $(\Omega, \gF)$. For $(\omega_{\mathrm{G}}, c) \in \Omega$, define $\mG_t(\omega_{\mathrm{G}}, c) = \mG_t(\omega_{\mathrm{G}})$ ($t \in [0,1]$); $c(\omega_{\mathrm{G}}, c) = c$. Note that we overload the notations $\{\mG_t\}_{t\in[0,1]}$ and $c$ and redefine them as random variables in the new product space for simplicity.

    Second, we construction of the joint probability measure $\sP$ on the product space $(\Omega, \gF)$.

    For each $\mG_1' \in \gG$,  define a probability measure $\sP_{\mathrm{G}}^{\mG_1'}$ on $\Omega_{\mathrm{G}}$ by 
    \begin{equation}
        \sP_{\mathrm{G}}^{\mG_1'} (E_{\mathrm{G}}) =  \sP_{\mathrm{G}}\left( E_{\mathrm{G}}\mid \mG_1 = \mG_1'\right) \quad \text{for any } E_{\mathrm{G}}\in \gF_{\mathrm{G}}.
    \end{equation}

    Again, for each $\mG_1' \in \gG$, define a probability measure $\sP_{\mathrm{C}}^{\mG_1'}$ on $\gC$ by $\sP_{\mathrm{C}}^{\mG_1'} (c') = f_{c'}(\mG_1')$. By definition of $f_c$, $\sP_{\mathrm{C}}^{\mG_1}$ is a valid distribution.

    Then, we can define $\sP^{\mG_1'}$ on $\Omega$ as
    \begin{equation}
        \sP^{\mG_1'} (E_{\mathrm{G}}\times E_{\mathrm{C}}) = \sP_{\mathrm{G}}^{\mG_1'} (E_{\mathrm{G}}) \sP_{\mathrm{C}}^{\mG_1'} (E_{\mathrm{C}})  \quad \text{for any } E_{\mathrm{G}}\in \gF_{\mathrm{G}}; ~E_{\mathrm{C}}\in 2^{\gC}.
    \end{equation}

    We obtain the joint probability measure $\sP$ on the product space $(\Omega, \gF)$ by integrating over $\mG_1' \in \gG$:
    \begin{equation}
        \sP(E) = \int_{\mG_1' \in \gG}  \sP^{\mG_1'} (E)  \sP_{\mathrm{G}, 1}(\mathrm{d}\mG_1')\quad \text{for any } E\in \gF,
    \end{equation}
    where $\sP_{\mathrm{G}, 1}(\cdot)$ is the marginal distribution (law) of $\mG_1$ in $(\Omega_{\mathrm{G}}, \gF_{\mathrm{G}}, \sP_{\mathrm{G}})$.
    
    Now we are ready to verify that the above joint probability measure $\sP$ satisfies the desired properties.

    \paragraph{Preservation of flow marginals.}
    For any $E_{\mathrm{G}} \in \gF_{\mathrm{G}}$,
    \begin{equation}
    \sP(E_{\mathrm{G}} \times \gC) = \int_{\gG}  \sP^{\mG_1'} (E_{\mathrm{G}} \times \gC)  \sP_{\mathrm{G}, 1}(\mathrm{d}\mG_1') 
    = \int_{\gG}  \sP_{\mathrm{G}}^{\mG_1'} (E_{\mathrm{G}})  \sP_{\mathrm{G}, 1}(\mathrm{d}\mG_1')
    = \sP_{\mathrm{G}}(E_{\mathrm{G}}).
    \end{equation}
    Thus, the marginal distribution of $\{\mG_t\}_{t \in [0,1]}$ under $\sP$ is $\sP_{\mathrm{G}}$.
    
    Specifically, for any $0\leq t_1 < \cdots < t_m <1$, $$\sP(\mG_{t_1}, \cdots, \mG_{t_m}) = \sP_{\mathrm{G}}(\mG_{t_1},  \cdots, \mG_{t_m}).$$

    \paragraph{Alignment with Target Predictor.} Note that in our definition, $\sP^{\mG_1'}(\cdot) = \sP(\cdot\mid \mG_1=\mG_1')$. Therefore, for any $\mG_1' \in \gG$ and $c' \in \gC$, we have 
    \begin{equation} 
    \sP(c = c' \mid \mG_1=\mG_1') = \sP^{\mG_1'}(\Omega_{\mathrm{G}}\times \{c'\}) = \sP_{\mathrm{G}}^{\mG_1'}(\Omega_{\mathrm{G}})  \sP_{\mathrm{C}}^{\mG_1'}(\{c'\}) =  \sP_{\mathrm{C}}^{\mG_1'}(\{c'\}) = f_c(\mG_1').
    \end{equation} 
    
    This establishes the specified alignment with the target predictor.
    
    \paragraph{Conditional independence of trajectory and target.}
    
    Let $0 \leq t_1 < t_2 < \cdots < t_m < 1$, and let $\varphi_{\mathrm{G}}: \gG^m \to \mathbb{R}$ and $\varphi_{\mathrm{C}}: \gC \to \mathbb{R}$ be any bounded measurable functions. We need to show that
    \[
    \E_{\sP}\left[ \varphi_{\mathrm{G}}(\mG_{t_1}, \cdots, \mG_{t_m})  \varphi_{\mathrm{C}}(c) \mid \mG_1 \right] = \E_{\sP}\left[ \varphi_{\mathrm{G}}(\mG_{t_1}, \cdots, \mG_{t_m}) \mid \mG_1 \right] \E_{\sP}\left[ \varphi_{\mathrm{C}}(c) \mid \mG_1 \right].
    \]
    
    Note that under $\sP_{\mathrm{G}}$, given $\mG_1$, the random variables $(\mG_{t_1}, \cdots, \mG_{t_m})$ depend only on $\omega_{\mathrm{G}}$, and $c$ depends on $\omega_{\mathrm{G}}$ only through $\mG_1$. Therefore,
    \begin{align}
    &\E_{\sP}\left[  \varphi_{\mathrm{G}}(\mG_{t_1}, \cdots, \mG_{t_m}) \varphi_{\mathrm{C}}(c) \mid \mG_1 \right] \\
    = & \int_{\Omega_{\mathrm{G}}\times \gC}  
    \varphi_{\mathrm{G}}\left(\mG_{t_1}(\omega_{\mathrm{G}}', c'), \cdots, \mG_{t_m}(\omega_{\mathrm{G}}', c')\right)
    \varphi_{\mathrm{C}}\left(\omega_{\mathrm{G}}', c'\right) 
    \sP^{\mG_1}\left(\mathrm d\omega_{\mathrm{G}}'\mathrm dc'\right) \\
    = & \int_{\Omega_{\mathrm{G}}\times \gC}  
    \varphi_{\mathrm{G}}\left(\mG_{t_1}(\omega_{\mathrm{G}}', c'), \cdots, \mG_{t_m}(\omega_{\mathrm{G}}', c')\right)
    \varphi_{\mathrm{C}}\left(\omega_{\mathrm{G}}', c'\right) 
    \sP_{\mathrm{G}}^{\mG_1}\left(\mathrm d\omega_{\mathrm{G}}'\right)
    \sP_{\mathrm{C}}^{\mG_1}\left(\mathrm dc'\right) \\
    =& \left( \int_{\Omega_{\mathrm{G}}}  \varphi_{\mathrm{G}}(\mG_{t_1}(\omega_{\mathrm{G}}'), \cdots, \mG_{t_m}(\omega_{\mathrm{G}'})) \, \sP_{\mathrm{G}}^{\mG_1}\left(\mathrm d\omega_{\mathrm{G}}'\right) \right)
    \cdot \left( \int_{\gC}  \varphi_{\mathrm{C}}(c') \, \sP_{\mathrm{C}}^{\mG_1}\left(\mathrm dc'\right) \right) \\
    = &\E_{\sP_{\mathrm{G}}^{\mG_1}}\left[  \varphi_{\mathrm{G}}(\mG_{t_1}, \cdots, \mG_{t_m})  \right] \cdot 
    \E_{\sP_{\mathrm{C}}^{\mG_1}}\left[  \varphi_{\mathrm{C}}(c) \right]\\
    = &\E_{\sP}\left[  \varphi_{\mathrm{G}}(\mG_{t_1}, \cdots, \mG_{t_m}) \mid \mG_1 \right] \cdot 
    \E_{\sP}\left[  \varphi_{\mathrm{C}}(c) \mid \mG_1 \right],
    \end{align}
    indicating that $(\mG_{t_1}, \cdots, \mG_{t_m})$ and $c$ are independent conditioning on $\mG_1$.
  
    To sum up, we have constructed a probability measure $\sP$ on $(\Omega, \gF)$ and verify that $\sP$ satisfies the desired properties, which concludes the proof.
\end{proof}

\paragraph{Remark.} It's easy to see that our proof also applies when $\gC=\sR$. In this case, the joint probability measure $\sP$ needs to be defined on $(\Omega, \gF) = (\Omega_{\mathrm{G}}\times \sR, \gF_{\mathrm{G}}\otimes \gB(\sR))$, where $\gB(\sR)$ denotes the Borel $\sigma$-algebra of $\sR$. This lays the foundation of our method in the setting where $f_c$ is a regression model.

\newpage
\subsection{Formal Statement and Proof of Theorem \ref{theorem.guided_flow}}
\label{app.proofs-guided-flow}

We present the formal version of \cref{theorem.guided_flow} as two separate theorems: \cref{theorem.guided_flow_continuous_app} for the continuous flow and \cref{theorem.guided_flow_discrete_app} for the discrete flow.


\begin{theorem}[Guided velocity]
    Let $\{p_t(\vx_t)\}_{t\in[0,1]}$ be a continuous flow on $\sR^3$. Let $\vv_{t|1}(\vx_t|\vx_1)$ be the conditional velocity that generates $p_{t|1}(\vx_t|\vx_1)$ via Fokker-Planck Equation, i.e., $\partial_t p_{t|1} = -\nabla\cdot(\vv_{t|1}p_{t|1})$. Then the guided flow $\{p_t(\vx_t|c)\}_{t\in[0,1]}$ defined via the construction of \cref{thm.independence} can be generated by the following guided velocity via Fokker-Planck Equation:
    \begin{equation}
        \label{eq.guided-v-app}
        \vv_t(\vx_t|c) = \E_{p_{1|t}(\vx_1|\vx_t,c)}\left[\vv_{t|1}(\vx_t|\vx_1) \right].
    \end{equation}
    \vspace{-7mm}
\label{theorem.guided_flow_continuous_app}
\end{theorem}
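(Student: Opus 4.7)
The plan is to derive the guided Fokker--Planck equation by marginalization over $\vx_1$, exactly mirroring the standard argument that yields the unconditional velocity in flow matching, but now applied to the conditional law $p_t(\vx_t\mid c)$. The extra ingredient compared with the unconditional case is the conditional independence property provided by \cref{thm.independence-formal}, which guarantees that the underlying conditional flow $p_{t\mid 1}(\vx_t\mid \vx_1)$ is untouched by the target variable $c$.

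First I would write the guided marginal as a mixture over clean data by conditioning on $\vx_1$:
\begin{equation}
p_t(\vx_t\mid c) = \int p_{t\mid 1}(\vx_t\mid \vx_1, c)\, p_1(\vx_1\mid c)\, \diff \vx_1,
\end{equation}
and then invoke the conditional independence from \cref{thm.independence-formal} to replace $p_{t\mid 1}(\vx_t\mid \vx_1, c)$ by $p_{t\mid 1}(\vx_t\mid \vx_1)$. Differentiating in $t$ and taking the derivative inside the integral (justified by standard integrability/regularity assumptions on the conditional flow, which I would state as a mild hypothesis up front) gives
\begin{equation}
\partial_t p_t(\vx_t\mid c) = \int \partial_t p_{t\mid 1}(\vx_t\mid \vx_1)\, p_1(\vx_1\mid c)\, \diff \vx_1 = -\int \nabla\cdot\bigl(\vv_{t\mid 1}(\vx_t\mid \vx_1)\, p_{t\mid 1}(\vx_t\mid \vx_1)\bigr)\, p_1(\vx_1\mid c)\, \diff \vx_1,
\end{equation}
by the hypothesis that $\vv_{t\mid 1}$ generates $p_{t\mid 1}$ through the conditional Fokker--Planck equation.

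Next, I would pull the divergence outside the $\vx_1$-integral (linearity of $\nabla\cdot$) and rewrite the joint density using Bayes' rule under $\sP(\,\cdot\mid c)$:
\begin{equation}
p_{t\mid 1}(\vx_t\mid \vx_1)\, p_1(\vx_1\mid c) = p_{t\mid 1}(\vx_t\mid \vx_1, c)\, p_1(\vx_1\mid c) = p_{1\mid t}(\vx_1\mid \vx_t, c)\, p_t(\vx_t\mid c),
\end{equation}
where the first equality again uses the conditional independence guaranteed by \cref{thm.independence-formal}. Substituting, the inner integral collapses to $p_t(\vx_t\mid c)\cdot \E_{p_{1\mid t}(\vx_1\mid \vx_t, c)}[\vv_{t\mid 1}(\vx_t\mid \vx_1)]$, which is exactly $p_t(\vx_t\mid c)\,\vv_t(\vx_t\mid c)$ as defined in \cref{eq.guided-v-app}. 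Hence $\partial_t p_t(\vx_t\mid c) = -\nabla\cdot\bigl(\vv_t(\vx_t\mid c)\, p_t(\vx_t\mid c)\bigr)$, completing the proof.

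The only real obstacle is the technical justification of differentiating under the integral sign and of the Bayes-rule step in the presence of a continuous conditioning variable; I would handle this by assuming the same mild regularity on $p_{t\mid 1}$, $\vv_{t\mid 1}$ and $p_1$ used implicitly throughout the flow-matching literature (e.g., continuous differentiability in $t$, a dominating integrable envelope for $\partial_t p_{t\mid 1}$, and positivity of $p_t(\vx_t\mid c)$ wherever we divide). Everything else is bookkeeping: the key conceptual move is that \cref{thm.independence-formal} ensures the conditional flow is unchanged by $c$, so the only effect of guidance on the velocity is through the reweighted posterior $p_{1\mid t}(\vx_1\mid \vx_t, c)$.
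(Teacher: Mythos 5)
Your proposal is correct and follows essentially the same route as the paper's proof: the paper integrates both sides of the conditional Fokker--Planck equation against $p_{\mathrm{data}}(\vx_1\mid c)$ and then simplifies, whereas you start from the mixture representation $p_t(\vx_t\mid c)=\int p_{t\mid 1}(\vx_t\mid\vx_1)\,p_1(\vx_1\mid c)\,\diff\vx_1$ and differentiate in $t$ --- these are the same chain of equalities read in opposite directions, and both hinge on the same two ingredients (conditional independence from \cref{thm.independence-formal} to drop $c$ from $p_{t\mid 1}$, and the identity $p_{t\mid 1}(\vx_t\mid\vx_1)\,p_1(\vx_1\mid c)=p_{1\mid t}(\vx_1\mid\vx_t,c)\,p_t(\vx_t\mid c)$ to collapse the integral). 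You additionally flag the need for a dominating envelope to justify differentiating under the integral, a regularity point the paper leaves implicit.
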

\begin{proof}
We begin with the Fokker-Planck Equation of $p_{t|1}(\vx_t|\vx_1)$:
\begin{equation}
\partial_t p_{t|1}(\vx_t | \vx_1) = - \nabla \cdot \left(\vv_{t|1}(\vx_t|\vx_1) p_{t|1}(\vx_t | \vx_1)\right).
\end{equation}

Taking expectation with respect to $p_{\text{data}}(\vx_1|c)$ over both sides yields
\begin{equation}
\label{eq.guided-v-proof-main}
    \E_{p_{\text{data}}(\vx_1|c)}\left[\partial_t p_{t|1}(\vx_t | \vx_1)\right] = - \E_{p_{\text{data}}(\vx_1|c)}\left[\nabla \cdot \left(\vv_{t|1}(\vx_t|\vx_1) p_{t|1}(\vx_t | \vx_1)\right)\right].
\end{equation}

The left-hand size of \cref{eq.guided-v-proof-main} can be simplified as
\begin{align}
    \E_{p_{\text{data}}(\vx_1|c)}\left[\partial_t p_{t|1}(\vx_t | \vx_1)\right] 
    = & \int_{\sR^3} p_{\text{data}}(\vx_1|c) \partial_t p_{t|1}(\vx_t | \vx_1) \diff \vx_1 \\
    = & \partial_t  \left(\int_{\sR^3} p_{\text{data}}(\vx_1|c) p_{t|1}(\vx_t | \vx_1) \diff \vx_1 \right) \\
    = & \partial_t  \left(\int_{\sR^3} p_{\text{data}}(\vx_1|c) p_{t|1}(\vx_t | \vx_1, c) \diff \vx_1\right) \label{eq.guided-v-proof-conditional-independence} \\
    = & \partial_t p_{t}(\vx_t |c)
\end{align}
Note that in \cref{eq.guided-v-proof-conditional-independence}, we use the conditional independence property of trajectory and target and get $p_{\text{data}}(\vx_1|c) p_{t|1}(\vx_t | \vx_1)= p_{t}(\vx_t, \vx_1 |c)$.

For the right-hand size of \cref{eq.guided-v-proof-main}, we have
\begin{align}
    & - \E_{p_{\text{data}}(\vx_1|c)}\left[\nabla \cdot \left(\vv_{t|1}(\vx_t|\vx_1) p_{t|1}(\vx_t | \vx_1)\right)\right]\\ 
    = & - \int_{\sR^3}  \nabla \cdot \left(\vv_{t|1}(\vx_t|\vx_1) p_{t|1}(\vx_t | \vx_1)p_{\text{data}}(\vx_1|c)\diff \vx_1\right)  \\
    = & - \nabla \cdot \left( \int_{\sR^3}   \vv_{t|1}(\vx_t|\vx_1) p_{t|1}(\vx_t | \vx_1)p_{\text{data}}(\vx_1|c)\diff \vx_1 \right) \\
    = & - \nabla \cdot \left( \int_{\sR^3}   \vv_{t|1}(\vx_t|\vx_1) p_{t}(\vx_t, \vx_1 |c)\diff \vx_1 \right) \\
    = & - \nabla \cdot \left( p_{t}(\vx_t |c)\int_{\sR^3}   \vv_{t|1}(\vx_t|\vx_1) p_{t}(\vx_1 |\vx_t,c)\diff \vx_1 \right) \\
    = & - \nabla \cdot \left( \vv_t (\vx_t|c) p_{t}(\vx_t |c) \right)
\end{align}

Putting the above together leads to $\partial_t p_{t}(\vx_t |c)=  - \nabla \cdot \left( \vv_t (\vx_t|c) p_{t}(\vx_t |c) \right)$.
\end{proof}

\begin{theorem}[Guided rate matrix]
    Let $\{p_t(a_t)\}_{t\in[0,1]}$ be a discrete flow on $\gA$. Let $\mR_{t|1}(a_t, b|a_1)$ be the conditional velocity that generates $\vp_{t|1}=(p_{t|1}(a|a_1))_{a\in\gA}$ via Kolmogorov Equation, i.e., $\partial_t \vp_{t|1} = \vp_{t|1}\mR_{t|1}$. Then the guided flow $\{p_t(a_t|c)\}_{t\in[0,1]}$ defined via the construction of \cref{thm.independence} can be generated by the following guided rate matrix via Kolmogorov Equation:
    \begin{equation}
        R_t(a_t, b|c) = \E_{p_{1|t}(a_1|a_t,c)}\left[R_{t|1}(a_t,b|a_1) \right].
        \label{eq.guided-R-app}
    \end{equation}
    \vspace{-7mm}
\label{theorem.guided_flow_discrete_app}
\end{theorem}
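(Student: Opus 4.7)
The plan is to mirror the strategy used for the continuous guided velocity in \cref{theorem.guided_flow_continuous_app}, replacing the Fokker--Planck equation with the Kolmogorov equation and integrals over $\vx_1$ with finite sums over $a_1\in\gA$. Concretely, I would start from the conditional Kolmogorov equation $\partial_t \vp_{t|1}(\cdot|a_1) = \vp_{t|1}(\cdot|a_1)\mR_{t|1}(\cdot,\cdot|a_1)$, take the expectation of both sides under $p_{\text{data}}(a_1|c)$, and show that the resulting identity is precisely the Kolmogorov equation for $\vp_t(\cdot|c)$ with rate matrix $\mR_t(\cdot,\cdot|c)$ defined in \eqref{eq.guided-R-app}.

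Step by step, I would write the Kolmogorov equation componentwise as $\partial_t p_{t|1}(b|a_1) = \sum_{a\in\gA} p_{t|1}(a|a_1)\,R_{t|1}(a,b|a_1)$, multiply both sides by $p_{\text{data}}(a_1|c)$, and sum over $a_1\in\gA$. The left-hand side becomes $\partial_t \sum_{a_1} p_{t|1}(b|a_1)\, p_{\text{data}}(a_1|c)$, where the interchange of $\partial_t$ and $\sum_{a_1}$ is immediate since $\gA$ is finite. Invoking the conditional independence of trajectory and target from \cref{thm.independence-formal}, namely $p_{t|1}(b|a_1) = p_{t|1}(b|a_1,c)$, this collapses to $\partial_t p_t(b|c)$. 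For the right-hand side, the same conditional independence lets me identify $p_{t|1}(a|a_1)\, p_{\text{data}}(a_1|c)$ with the joint $p_t(a,a_1|c)$; factoring as $p_t(a|c)\, p_{1|t}(a_1|a,c)$ and summing first over $a_1$ and then over $a$ yields $\sum_{a\in\gA} p_t(a|c)\, \E_{p_{1|t}(a_1|a,c)}[R_{t|1}(a,b|a_1)] = \sum_{a\in\gA} p_t(a|c)\, R_t(a,b|c)$. Equating the two sides gives $\partial_t p_t(b|c) = \sum_{a\in\gA} p_t(a|c)\, R_t(a,b|c)$ for every $b\in\gA$, which in vector--matrix form is exactly $\partial_t \vp_t(\cdot|c) = \vp_t(\cdot|c)\,\mR_t(\cdot,\cdot|c)$.

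The main obstacle is notational rather than computational: one has to track carefully which quantities are conditioned on $c$ and apply the conditional independence at the right places, both to swap $p_{t|1}(\cdot|a_1)$ with $p_{t|1}(\cdot|a_1,c)$ on the left-hand side and to recognize $p_t(a,a_1|c)$ on the right-hand side. Everything else is the discrete analog of the continuous proof, and in fact slightly easier since interchanging $\partial_t$ with a finite sum over $\gA$ requires no Fubini-style regularity argument, in contrast to moving $\partial_t$ past an integral in the continuous setting.
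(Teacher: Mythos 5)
Your proof is correct and follows exactly the route the paper indicates: take the expectation of the conditional Kolmogorov equation with respect to $p_{\text{data}}(a_1|c)$, use conditional independence to identify $p_{\text{data}}(a_1|c)\,p_{t|1}(a|a_1)$ with $p_t(a,a_1|c)$, and factor through $p_t(a|c)\,p_{1|t}(a_1|a,c)$ to recover the Kolmogorov equation for the guided flow with the claimed rate matrix. The paper only sketches this as ``the same idea as the continuous case''; you have filled in precisely the intended details.
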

\begin{proof}
    The proof idea is exactly the same as that of \cref{theorem.guided_flow_continuous_app}, i.e., taking expectation with respect to $p_{\text{data}}(a_1|c)$ over both sides of Kolmogorov Equation and simplifying them. Again, $p_{\text{data}}(a_1|c) p_{t|1}(a_t | a_1)= p_{t}(a_t, a_1 |c)$ will be useful in the simplification.
\end{proof}

\newpage
\subsection{Importance Sampling Fails for Sampling the Discrete Guided Flow}

\begin{proposition}[Naive importance sampling for the discrete guided flow]
    \label{prop.importance-sampling-guided}
    For $t\in[0,1)$, the guided rate matrix defined in \cref{theorem.guided_flow} satisfies
    \begin{equation}
        R_t(a_t^{(i)}, j|c)  =\E_{p_{1|t}(\mG_1|\mG_t)}\left[\frac{p_{c}(c|\mG_1)}{p(c|\mG_t)}R_{t|1}(a_t^{(i)},j|a_1^{(i)}) \right] 
    \end{equation}
\end{proposition}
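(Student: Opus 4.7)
The plan is to derive the identity by a direct application of Bayes rule, using the conditional independence property that Theorem \ref{thm.independence} already guarantees. Starting from the guided rate matrix formula in Theorem \ref{theorem.guided_flow}, I would first upgrade the conditioning variable from the single coordinate $a_t^{(i)}$ to the entire state $\mG_t$, writing
\begin{equation*}
    R_t(a_t^{(i)}, j|c) = \E_{p_{1|t}(\mG_1|\mG_t,c)}\left[R_{t|1}(a_t^{(i)}, j | a_1^{(i)})\right].
\end{equation*}
This upgrade is legal because the conditional flow factorizes across atoms and modalities (see \cref{eq.guided-factorization}), so marginalizing $p_{1|t}(\mG_1|\mG_t,c)$ down to $p_{1|t}(a_1^{(i)}|a_t^{(i)},c)$ recovers the expression in Theorem \ref{theorem.guided_flow}.

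Next I would rewrite $p_{1|t}(\mG_1|\mG_t,c)$ using Bayes rule:
\begin{equation*}
    p_{1|t}(\mG_1|\mG_t,c) = \frac{p(c|\mG_1,\mG_t)\, p_{1|t}(\mG_1|\mG_t)}{p(c|\mG_t)}.
\end{equation*}
The crucial simplification comes from the third bullet of Theorem \ref{thm.independence}: since $\mG_t$ and $c$ are conditionally independent given $\mG_1$, we have $p(c|\mG_1,\mG_t) = p(c|\mG_1) = f_c(\mG_1)$. Substituting this into the expectation and pulling the deterministic factor $1/p(c|\mG_t)$ outside yields
\begin{equation*}
    R_t(a_t^{(i)}, j|c) = \E_{p_{1|t}(\mG_1|\mG_t)}\left[\frac{f_c(\mG_1)}{p(c|\mG_t)}\, R_{t|1}(a_t^{(i)}, j|a_1^{(i)})\right],
\end{equation*}
which is the claimed identity upon identifying $f_c(\mG_1)$ with $p(c|\mG_1)$.

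There is no substantive obstacle here; the derivation is essentially two lines of Bayes rule plus the conditional independence already established in Theorem \ref{thm.independence}. The only subtlety worth flagging in the writeup is the upgrade from conditioning on $a_t^{(i)}$ to conditioning on $\mG_t$, which must appeal to the factorization property of the guided conditional flow. The real point of the proposition is not difficulty but rather the appearance of the intractable denominator $p(c|\mG_t)$, a time-dependent classifier evaluation, which motivates the self-normalized estimator introduced in \cref{prop.sample} and analyzed in \cref{thm.discrete-error-bound}.
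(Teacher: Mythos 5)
Your proof is correct and takes essentially the same route as the paper: both rewrite $p_{1|t}(\mG_1|\mG_t,c)$ via Bayes' rule and then invoke the conditional independence of $\mG_t$ and $c$ given $\mG_1$ (Theorem~\ref{thm.independence}) to replace $p(c|\mG_1,\mG_t)$ with $p(c|\mG_1)=f_c(\mG_1)$, arriving at the importance-sampling ratio $p(c|\mG_1)/p(c|\mG_t)$. Your explicit remark about upgrading the conditioning from $a_t^{(i)}$ to the full state $\mG_t$ (and marginalizing back down using the factorization in Eq.~\ref{eq.guided-factorization}) is a point the paper elides under ``by definition''; it is a welcome clarification but not a different argument.
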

\begin{proof}
    By definition of the guided rate matrix and importance sampling, we have 
    \begin{equation}
        R_t(a_t^{(i)}, j|c)  =\E_{p_{1|t}(\mG_1
        |\mG_t)}\left[\frac{p_{1|t}(\mG_1|\mG_t, c)}{p_{1|t}(\mG_1|\mG_t)}R_{t|1}(a_t^{(i)},j|a_1^{(i)}) \right].
    \end{equation}

    Note that for any $t\in[0,1)$, $\mG_{t}$ and $c$ are conditionally independent given $\mG_1$. Therefore,
    \begin{align}
        p_{1|t}(\mG_1|\mG_t,c) = &\frac{p(\mG_1,c|\mG_t)}{p(c|\mG_t)} = \frac{p_{1|t}(\mG_1|\mG_t)p(c|\mG_1,\mG_t)}{p(c|\mG_t)} = \frac{p_{1|t}(\mG_1|\mG_t)p(c|\mG_1)}{p(c|\mG_t)}\\
        \Rightarrow \frac{p_{1|t}(\mG_1|\mG_t, c)}{p_{1|t}(\mG_1|\mG_t)} =& \frac{p(c|\mG_1)}{p(c|\mG_t)},
    \end{align}
    which concludes the proof.
\end{proof}

\begin{proposition}[Restatement of \cref{prop.sample}]
\label{prop.sample-app}
    For $t\in[0,1)$, the guided rate matrix defined in \cref{theorem.guided_flow} satisfies
    \begin{equation}
    R_t(a_t^{(i)}, j|c) = 
    \frac{\E_{p_{1|t}(\mG_1|\mG_t)}\left[f_{c}(\mG_1) R_{t|1}(a_t^{(i)},j|a_1^{(i)}) \right]}{\E_{p_{1|t}(\mG_1|\mG_t)}\left[f_{c}(\mG_1)\right]}.
    \end{equation}
\end{proposition}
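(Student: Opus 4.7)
The plan is to derive the identity in two short moves, both building directly on what is already established in the paper. First, I would invoke \cref{prop.importance-sampling-guided}, which gives the importance-sampling representation
\begin{equation*}
R_t(a_t^{(i)}, j|c) = \E_{p_{1|t}(\mG_1|\mG_t)}\!\left[\frac{p(c|\mG_1)}{p(c|\mG_t)}\, R_{t|1}(a_t^{(i)},j|a_1^{(i)})\right].
\end{equation*}
Since $p(c|\mG_t)$ is a scalar that does not depend on the integration variable $\mG_1$, it factors out of the expectation, and using $p(c|\mG_1) = f_c(\mG_1)$ from the ``alignment with target predictor'' property of \cref{thm.independence}, we obtain
\begin{equation*}
R_t(a_t^{(i)}, j|c) = \frac{1}{p(c|\mG_t)}\,\E_{p_{1|t}(\mG_1|\mG_t)}\!\left[f_c(\mG_1)\, R_{t|1}(a_t^{(i)},j|a_1^{(i)})\right].
\end{equation*}

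Second, I would rewrite the normalizing constant $p(c|\mG_t)$ as an expectation over the same measure by the law of total probability combined with the conditional independence of trajectory and target (the third conclusion of \cref{thm.independence}). Concretely,
\begin{equation*}
p(c|\mG_t) = \int p(c|\mG_1,\mG_t)\,p_{1|t}(\mG_1|\mG_t)\,\diff\mG_1 = \int f_c(\mG_1)\,p_{1|t}(\mG_1|\mG_t)\,\diff\mG_1 = \E_{p_{1|t}(\mG_1|\mG_t)}[f_c(\mG_1)],
\end{equation*}
where the middle equality uses $p(c|\mG_1,\mG_t) = p(c|\mG_1) = f_c(\mG_1)$ granted by conditional independence. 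Plugging this expression for $p(c|\mG_t)$ into the previous display yields the claimed self-normalized form.

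There is essentially no hard step here: both ingredients (the importance-sampling identity and the conditional independence) have already been proved as part of \cref{prop.importance-sampling-guided} and \cref{thm.independence}. The only point that warrants a sentence of justification is verifying that $p(c|\mG_t) > 0$ so the division is well-defined, which is guaranteed on the support of interest by the positivity assumption $\underline{f_c} = \inf_{\mG\in\gG} f_c(\mG) > 0$ that is used later in \cref{thm.discrete-error-bound}. The practical value of rewriting the rate matrix in this self-normalized form—bypassing the unknown time-dependent denominator $p(c|\mG_t)$—is exactly what enables the Monte-Carlo estimator \cref{eq.discrete-guidance} and the complexity analysis that follows.
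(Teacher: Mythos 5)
Your proposal is correct and follows essentially the same route as the paper's own proof: invoke \cref{prop.importance-sampling-guided}, factor out the constant $p(c|\mG_t)$, and then expand that denominator as $\E_{p_{1|t}(\mG_1|\mG_t)}[f_c(\mG_1)]$ using conditional independence of trajectory and target. The only thing you add beyond the paper's argument is the explicit remark that $p(c|\mG_t)>0$ is needed for the division, which is a fair point of hygiene but not a new step.
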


\begin{proof}
    \Cref{prop.importance-sampling-guided} indicates that 
    \begin{equation}
    \label{eq.sample-proof-1}
        R_t(a_t^{(i)}, j|c)  = \frac{\E_{p_{1|t}(\mG_1|\mG_t)}\left[f_{c}(\mG_1)R_{t|1}(a_t^{(i)},j|a_1^{(i)}) \right] }{p(c|\mG_t)}
    \end{equation}

    By Conditional independence of trajectory and target in the guided flow, we have
    \begin{align}
        p(c|\mG_t) =& \int_{\gG} p(c|\mG_1, \mG_t) p_{1|t}(\mG_1|\mG_t) \diff \mG_1 \\
        = &\int_{\gG} p(c|\mG_1) p_{1|t}(\mG_1|\mG_t) \diff \mG_1 \\
        = &\E_{p_{1|t}(\mG_1|\mG_t)}\left[f_{c}(\mG_1)\right]
        \label{eq.sample-proof-2}
    \end{align}
    Combining \cref{eq.sample-proof-1} and \cref{eq.sample-proof-2} leads to the conclusion.
\end{proof}

\clearpage

\subsection{Proof of Theorem \ref{thm.discrete-error-bound}}

\begin{theorem}[Restatement of \cref{thm.discrete-error-bound}]
\label{thm.discrete-error-bound-app}
    Let $\mG_{1|t, 1}, \cdots, \mG_{1|t, K} \sim \mathrm{i.i.d.}~p_{1|t}(\cdot|\mG_t)$. Define the estimation of $R_t(a_t^{(i)}, b|c)$ as 
    \begin{equation}
        \hat R_t(a_t^{(i)}, b|c)=\sum_{k=1}^K f_c(\mG_{1|t, k})R_{t|1}(a_t^{(i)},b\mid a_{1|t, k}^{(i)}) \bigg/ \sum_{k=1}^K f_c(\mG_{1|t, k}).
        \label{eq.discrete-guidance-app}
    \end{equation}
    Assume $\underline{f_c} = \inf\limits_{\mG\in\gG} f_c(\mG)>0$. Given any $\varepsilon\in (0, \underline{f_c}/2)$, $\delta\in(0,1)$, if $K=\Theta\left(\frac{1}{\varepsilon^2}\log \frac{n|\gA|}{\delta}\right)$, then
    \begin{equation}
    \sP \left(
        \sup_{i\in\{1, \cdots, n\}, ~b\in\gA} \left|\hat R_t(a_t^{(i)}, b|c) - R_t(a_t^{(i)}, b|c)\right|<\varepsilon 
         \right) \geq 1-\delta
    \end{equation}
\end{theorem}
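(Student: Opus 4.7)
The plan is to express $\hat R_t(a_t^{(i)}, b|c)$ as a ratio of two empirical averages, apply Hoeffding's inequality to show each concentrates around its expectation, and then convert simultaneous concentration of numerator and denominator into a concentration bound on the ratio. Write $\hat R_t(a_t^{(i)}, b|c) = A_K^{(i,b)}/B_K$ where $A_K^{(i,b)} = \frac{1}{K}\sum_k f_c(\mG_{1|t,k}) R_{t|1}(a_t^{(i)}, b|a_{1|t,k}^{(i)})$ and $B_K = \frac{1}{K}\sum_k f_c(\mG_{1|t,k})$. By \cref{prop.sample-app}, the target equals $R_t(a_t^{(i)}, b|c) = A^{(i,b)}/B$, with $A^{(i,b)}$ and $B$ the expectations of the summands.

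Since $f_c$ is bounded above by some $\bar f_c:=\|f_c\|_\infty<\infty$ and $R_{t|1}(a_t^{(i)}, b|a_1^{(i)})\in\{0, 1/(1-t)\}$ by \cref{eq.conditional-velocity-rate-matrix}, each summand of $A_K^{(i,b)}$ lies in $[0,M]$ with $M:=\bar f_c/(1-t)$ and each summand of $B_K$ in $[0,\bar f_c]$. Hoeffding's inequality then yields, for any $\eta>0$,
\begin{equation}
    \sP\bigl(|A_K^{(i,b)}-A^{(i,b)}|\geq \eta\bigr) \leq 2\exp(-2K\eta^2/M^2),\qquad \sP(|B_K-B|\geq \eta) \leq 2\exp(-2K\eta^2/\bar f_c^{\,2}).
\end{equation}
Taking a union bound over the $n|\gA|$ pairs $(i,b)$ together with the single event for $B_K$ gives that with probability at least $1-2(n|\gA|+1)\exp(-cK\eta^2)$, for some explicit $c>0$, the estimates $|A_K^{(i,b)}-A^{(i,b)}|\leq \eta$ and $|B_K-B|\leq \eta$ hold simultaneously for every $(i,b)$.

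On this good event, apply the standard ratio decomposition
\begin{equation}
    \hat R_t(a_t^{(i)}, b|c) - R_t(a_t^{(i)}, b|c) = \frac{A_K^{(i,b)}-A^{(i,b)}}{B_K} + \frac{A^{(i,b)}\,(B-B_K)}{B\,B_K}.
\end{equation}
Since $B=\E[f_c(\mG_1)]\geq \underline{f_c}$ and, by choosing $\eta\leq \underline{f_c}/2$, also $B_K\geq \underline{f_c}/2$, while $A^{(i,b)}\leq \bar f_c/(1-t)$, the right-hand side is bounded by $C\eta$ for an explicit constant $C$ depending only on $\underline{f_c},\bar f_c,$ and $1/(1-t)$. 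Setting $\eta = \varepsilon/C$ and solving $2(n|\gA|+1)\exp(-cK(\varepsilon/C)^2)\leq \delta$ gives the claimed sample complexity $K = \Theta\bigl((1/\varepsilon^2)\log(n|\gA|/\delta)\bigr)$.

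The main obstacle is bookkeeping: (i) ensuring $B_K$ stays bounded away from zero, which is exactly where the hypothesis $\varepsilon<\underline{f_c}/2$ enters; (ii) cleanly absorbing the $(1-t)^{-1}$ boundedness constant of $R_{t|1}$ into $C$ without affecting the dependence on $n$ and $|\gA|$; and (iii) verifying that the union-bound cost is $\log(n|\gA|)$ rather than $n|\gA|$, which is what turns the naive $\gO(|\gA|^n)$ exact-expectation cost into the claimed $\gO(\log(n|\gA|))$ sample complexity.
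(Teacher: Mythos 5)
Your proof is correct and follows essentially the same route as the paper's: you express $\hat R_t$ as a ratio of empirical averages, apply Hoeffding's inequality to each, use the standard ratio-error decomposition (algebraically equivalent to the paper's $\frac{D|N-\hat N|+N|\hat D - D|}{D\hat D}$ bound), lower-bound the empirical denominator by $\underline{f_c}/2$ via the hypothesis $\varepsilon<\underline{f_c}/2$, and finish with a union bound over the $n|\gA|$ pairs $(i,b)$. The only cosmetic difference is that you introduce a generic upper bound $\bar f_c$ whereas the paper implicitly uses $f_c\le 1$ (since $f_c$ is a probability); this does not change the argument.
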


\begin{proof}
    Consider some fixed $i\in\{1, \cdots, n\}, ~b\in\gA$. To simplify our exposition, denote
    \begin{align}
        N_{i,b} = &\E_{p_{1|t}(\mG_1|\mG_t)}\left[f_{c}(\mG_1) R_{t|1}(a_t^{(i)},j|a_1^{(i)}) \right]\\
        D_{i,b} = &\E_{p_{1|t}(\mG_1|\mG_t)}\left[f_{c}(\mG_1) \right]\\
        \hat N_{i,b} = & \frac{1}{K}\sum_{k=1}^K f_c(\mG_{1|t, k})R_{t|1}(a_t^{(i)},b\mid a_{1|t, k}^{(i)})\\
        \hat D_{i,b} = & \frac{1}{K}\sum_{k=1}^K f_c(\mG_{1|t, k})
    \end{align}

        
            

    We note that for any $k\in\{1, \cdots, K\}$, 
    \begin{equation}
        0 < f_c(\mG_{1|t, k})R_{t|1}(a_t^{(i)},b\mid a_{1|t, k}^{(i)}) < \frac{1}{1-t}.
    \end{equation}
    Therefore, by Hoeffding's inequality, we have
    \begin{equation}
        \sP \left(
            \left|N_{i,b} - \hat N_{i,b} \right| > \frac{\varepsilon (1-t) \underline{f_c}^2}{4}
        \right) \leq 2 \exp\left(-\frac{K\varepsilon^2(1-t)^4 \underline{f_c}^4}{8}\right).
    \end{equation}
    Similarly, for $\left|D_{i,b} - \hat D_{i,b} \right|$, we have
    \begin{equation}
        \sP \left(
            \left|D_{i,b} - \hat D_{i,b} \right| > \frac{\varepsilon (1-t) \underline{f_c}^2}{4}
        \right) \leq 2 \exp\left(-\frac{K\varepsilon^2(1-t)^4 \underline{f_c}^4}{8}\right).
    \end{equation}

    Suppose $\left|N_{i,b} - \hat N_{i,b} \right|\leq \dfrac{\varepsilon (1-t) \underline{f_c}^2}{4}$ and $\left|D_{i,b} - \hat D_{i,b} \right|\leq \dfrac{\varepsilon (1-t) \underline{f_c}^2}{4}$, then we have
    \begin{align}
        \left|\frac{N_{i,b}}{D_{i,b}} - \frac{\hat N_{i,b}}{\hat D_{i,b}} \right| = &\frac{\left|N_{i,b}\hat D_{i,b}-\hat N_{i,b} D_{i,b}\right|}{D_{i,b} \hat D_{i,b}} \\
        \leq & \frac{D_{i,b}\left|N_{i,b} -\hat N_{i,b} \right|+N_{i,b}\left|\hat D_{i,b}-D_{i,b}\right|}{D_{i,b} \hat D_{i,b}}\\
        \leq & \frac{D_{i,b}+N_{i,b}}{D_{i,b} \hat D_{i,b}} \cdot \frac{\varepsilon (1-t) \underline{f_c}^2}{4} \\
        \leq & \frac{\varepsilon \underline{f_c}^2}{2D_{i,b} \hat D_{i,b}} \qquad \left(\text{Note that } 0\leq D_{i,b},N_{i,b}\leq \frac{1}{1-t}\right)
        \label{eq.proof-discrete-guidance-error-bound}
    \end{align}
    Recall the definition of $\underline{f_c}$ and the fact that $\varepsilon\in \left(0, \underline{f_c}/2\right)$, we have $D_{i,b}\geq \underline{f_c}$ and $D_{i,b}> \underline{f_c}/2$. Plugging these two lower bounds into \cref{eq.proof-discrete-guidance-error-bound} yields $\left|\dfrac{N_{i,b}}{D_{i,b}} - \dfrac{\hat N_{i,b}}{\hat D_{i,b}} \right| < \varepsilon$.

    Thus $\left|\dfrac{N_{i,b}}{D_{i,b}} - \dfrac{\hat N_{i,b}}{\hat D_{i,b}} \right| \geq \varepsilon$ must imply $\left|N_{i,b} - \hat N_{i,b} \right|> \dfrac{\varepsilon (1-t) \underline{f_c}^2}{4}$ or $\left|D_{i,b} - \hat D_{i,b} \right|>\dfrac{\varepsilon (1-t) \underline{f_c}^2}{4}$.
    \begin{align}
        &\sP \left(
            \left|\hat R_t(a_t^{(i)}, b|c) - R_t(a_t^{(i)}, b|c)\right|\geq\varepsilon 
        \right) \\
        =
        & \sP \left(
            \left|\frac{N_{i,b}}{D_{i,b}} - \frac{\hat N_{i,b}}{\hat D_{i,b}} \right| \geq \varepsilon
        \right) \\
        \leq & 
        \sP \left(
            \left|N_{i,b} - \hat N_{i,b} \right| > \frac{\varepsilon (1-t) \underline{f_c}^2}{4}
        \right) + 
        \sP \left(
            \left|D_{i,b} - \hat D_{i,b} \right| > \frac{\varepsilon (1-t) \underline{f_c}^2}{4}
        \right)\\
        \leq & 4\exp\left(-\frac{K\varepsilon^2(1-t)^4 \underline{f_c}^4}{8}\right).
    \end{align}

    Applying union bound on $i\in\{1, \cdots, n\}, ~b\in\gA$ yields
    \begin{equation*}
        \sP \left(
        \sup_{i\in\{1, \cdots, n\}, ~b\in\gA} \left|\hat R_t(a_t^{(i)}, b|c) - R_t(a_t^{(i)}, b|c)\right|\geq\varepsilon
        \right) 
        \leq  4n|\gA|\exp\left(-\frac{K\varepsilon^2(1-t)^4 \underline{f_c}^4}{8}\right).
    \end{equation*}

    Set 
    \begin{equation}
        K = \frac{8}{\varepsilon^2 (1-t)^4\underline{f_c}^4}\log \frac{4n|\gA|}{\delta}.
    \end{equation}
    Then we have
    \begin{equation}
        \sP \left(
        \sup_{i\in\{1, \cdots, n\}, ~b\in\gA} \left|\hat R_t(a_t^{(i)}, b|c) - R_t(a_t^{(i)}, b|c)\right|\geq\varepsilon
        \right) \leq \delta,
    \end{equation}
    which concludes the proof.
\end{proof}

\clearpage

\subsection{Formal Statement and Proof of Theorem \ref{thm.invariance}}
\label{app.proof-invariance}
Before stating and proving the formal version of \cref{thm.invariance}, we present additional definitions and technical lemmas:

\begin{definition}[$\mathrm{SO}(3)$-invariance]
    \label{def.invariance}
    We say a mapping $h:\sR^{3\times n}\to \sR$ is $\mathrm{SO}(3)$\textbf{-invariant} if
    \begin{equation}
        h(\mS\mX) =  h(\mX) \qquad \text{for any } \mX\in\sR^{3\times n}\text{ and }\mS\in\mathrm{SO}(3).
    \end{equation}
\end{definition}

\begin{definition}[$\mathrm{SO}(3)$-equivariance]
    \label{def.equivariance}
    We say a mapping $h:\sR^{3\times n}\to \sR^{3\times n}$ is $\mathrm{SO}(3)$\textbf{-equivariant} if
    \begin{equation}
        h(\mS\mX) = \mS h(\mX) \qquad \text{for any } \mX\in\sR^{3\times n}\text{ and }\mS\in\mathrm{SO}(3).
    \end{equation}
\end{definition}

\begin{lemma}[Gradient of invariant mappings]
\label{lemma.invariant-grad}
    Let $h:\sR^{3\times n}\to \sR$ be an $\mathrm{SO}(3)$-invariant mapping. Then the gradient $\nabla h$ is an $\mathrm{SO}(3)$-equivariant mapping.
\end{lemma}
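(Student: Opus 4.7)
The plan is to differentiate the invariance identity $h(\mS\mX) = h(\mX)$ with respect to $\mX$ and use the orthogonality of rotation matrices. Concretely, I would fix an arbitrary $\mS\in\mathrm{SO}(3)$ and treat both sides of the equation as smooth functions of the matrix variable $\mX\in\sR^{3\times n}$, then compute $\nabla_{\mX}$ of each side.

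For the right-hand side, $\nabla_{\mX} h(\mX)$ is precisely $\nabla h$ evaluated at $\mX$. For the left-hand side, I would apply the chain rule entrywise: if $\mY = \mS\mX$, then $\partial (\mS\mX)_{kl}/\partial \mX_{ij} = \mS_{ki}\delta_{lj}$, so
\begin{equation}
\frac{\partial}{\partial \mX_{ij}} h(\mS\mX) = \sum_{k} \mS_{ki}\,\frac{\partial h}{\partial Y_{kj}}\bigg|_{\mY=\mS\mX} = \bigl(\mS^{\mathsf T}\,\nabla h(\mS\mX)\bigr)_{ij}.
\end{equation}
Matching the two gradients yields $\mS^{\mathsf T}\nabla h(\mS\mX) = \nabla h(\mX)$. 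Multiplying on the left by $\mS$ and using $\mS\mS^{\mathsf T}=\mI$ gives $\nabla h(\mS\mX) = \mS\,\nabla h(\mX)$, which is exactly the $\mathrm{SO}(3)$-equivariance of $\nabla h$ in the sense of \cref{def.equivariance}.

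There is no real obstacle here; the only subtlety worth stating explicitly is the bookkeeping for the ``left-multiplication'' action $\mX\mapsto \mS\mX$, which acts only on the row (spatial) index of $\mX$ and therefore transforms the gradient via $\mS$ rather than via $\mS\otimes \mI_n$ or any block form. If $h$ is only assumed to be differentiable (not $C^1$), the same pointwise chain-rule argument still works at every point where $\nabla h$ exists; otherwise one could alternatively argue via the directional derivative along any curve $\mS(s)\in\mathrm{SO}(3)$ with $\mS(0)=\mI$ to get an infinitesimal (Lie-algebra) version and integrate, but this is unnecessary for the stated conclusion.
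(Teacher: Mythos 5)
Your proof is correct and takes the same approach as the paper, which simply invokes the chain rule together with $\mathrm{SO}(3)$-invariance; you have written out the entrywise bookkeeping that the paper leaves implicit, arriving at the same conclusion $\nabla h(\mS\mX) = \mS\,\nabla h(\mX)$.
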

\begin{proof}
    By chain rule and  $\mathrm{SO}(3)$-invariance, we have $\nabla h(\mS\mX) = \mS \nabla h(\mX)$ for any $\mS\in\mathrm{SO}(3)$.
\end{proof}

\begin{lemma}[Composition and sum of equivariant mappings]
\label{lemma.equivariance-composition-sum}
    Let $h_1$ and $h_2$ be $\mathrm{SO}(3)$-equivariant mappings. Then their composition $h_2 \circ h_1$ and their sum $h_1+h_2$ is also $\mathrm{SO}(3)$-equivariant.
\end{lemma}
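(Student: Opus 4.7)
The plan is to verify the two claims directly from \cref{def.equivariance} by unfolding the definitions and using the equivariance of $h_1$ and $h_2$ at the appropriate stages. Both arguments are essentially one-line chain computations applied to an arbitrary rotation $\mS\in\mathrm{SO}(3)$ and arbitrary input $\mX\in\sR^{3\times n}$, so no induction, limit argument, or auxiliary construction is needed.

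For the composition $h_2\circ h_1$, I would start from $(h_2\circ h_1)(\mS\mX) = h_2(h_1(\mS\mX))$, then apply equivariance of $h_1$ to rewrite the inner term as $h_1(\mS\mX) = \mS h_1(\mX)$, and then apply equivariance of $h_2$ to the resulting expression $h_2(\mS h_1(\mX)) = \mS h_2(h_1(\mX)) = \mS (h_2\circ h_1)(\mX)$. Since $\mS$ and $\mX$ were arbitrary, this verifies the defining condition of \cref{def.equivariance} for $h_2\circ h_1$.

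For the sum $h_1+h_2$, I would use the linearity of matrix multiplication by $\mS$ on $\sR^{3\times n}$: expand $(h_1+h_2)(\mS\mX) = h_1(\mS\mX)+h_2(\mS\mX)$, apply equivariance of each summand to obtain $\mS h_1(\mX)+\mS h_2(\mX)$, and factor out $\mS$ to get $\mS(h_1(\mX)+h_2(\mX)) = \mS(h_1+h_2)(\mX)$. Again the conclusion follows because $\mS$ and $\mX$ were arbitrary.

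There is no real obstacle here; the only thing worth flagging is the implicit linearity step in the sum case, which relies on left-multiplication by an $\mS\in\mathrm{SO}(3)$ being a linear operation on $\sR^{3\times n}$ (valid since $\mS(\mA+\mB)=\mS\mA+\mS\mB$). I would state the proof as two short displayed chains of equalities with no further machinery.
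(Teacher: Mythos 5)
Your proposal is correct and matches the paper's approach: the paper simply states that the lemma "follows immediately from the definition," and your two chains of equalities are exactly the verification that remark is alluding to. The note about linearity of left-multiplication by $\mS$ in the sum case is a fair point of care but raises no issue.
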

\begin{proof}
    The lemma follows immediately from the definition of $\mathrm{SO}(3)$-equivariant mappings.
\end{proof}

\begin{lemma}[Push-forward of equivariant mappings]
\label{lemma.equivariance-push-forward}
    Let $h$ be an $\mathrm{SO}(3)$-equivariant mapping and $p$ be an $\mathrm{SO}(3)$-invariant distribution. Then $h_{\#} p$ is also an $\mathrm{SO}(3)$-invariant distribution, where $\#$ denotes the push-forward operator.
\end{lemma}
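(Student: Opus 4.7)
The plan is to work at the level of measurable sets and use the definition of push-forward directly. For any $\mS \in \mathrm{SO}(3)$ and any Borel set $A \subseteq \sR^{3\times n}$, I need to show that $(h_{\#}p)(\mS A) = (h_{\#}p)(A)$, where $\mS A := \{\mS\mX : \mX \in A\}$. By the definition of the push-forward measure, this reduces to verifying $p(h^{-1}(\mS A)) = p(h^{-1}(A))$.

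The central step is establishing the set identity $h^{-1}(\mS A) = \mS\, h^{-1}(A)$. I would do this through a chain of equivalences: $\mX \in h^{-1}(\mS A) \iff h(\mX) \in \mS A \iff \mS^{-1}h(\mX) \in A$. Then, using the $\mathrm{SO}(3)$-equivariance of $h$ (applied to $\mS^{-1} \in \mathrm{SO}(3)$, which is valid because $\mathrm{SO}(3)$ is a group), I rewrite $\mS^{-1}h(\mX) = h(\mS^{-1}\mX)$, so the condition becomes $h(\mS^{-1}\mX) \in A \iff \mS^{-1}\mX \in h^{-1}(A) \iff \mX \in \mS\, h^{-1}(A)$.

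Once the set identity is in hand, the result follows by applying the $\mathrm{SO}(3)$-invariance of $p$ at the measure level: $p(\mS\, h^{-1}(A)) = p(h^{-1}(A))$. I would also briefly note that the measure-level invariance is equivalent to the density-level invariance $p(\mX) = p(\mS\mX)$ in \cref{def.invariance}, via a change of variables with unit Jacobian (since rotations are volume-preserving).

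I do not anticipate any substantive obstacle, since this is essentially a routine measure-theoretic manipulation. The only point to be careful about is the direction of equivariance, but since $\mathrm{SO}(3)$ is closed under inversion, invoking equivariance with either $\mS$ or $\mS^{-1}$ is equally valid, and the argument goes through cleanly.
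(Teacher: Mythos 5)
Your proof is correct and fills in exactly the details the paper elides — the paper simply states the lemma "follows immediately from the definitions," while you carry out the standard measure-theoretic verification via the set identity $h^{-1}(\mS A) = \mS\, h^{-1}(A)$ and invariance of $p$. The side remark relating density-level and measure-level invariance via the unit Jacobian of rotations is accurate and matches the paper's \cref{def.invariance}.
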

\begin{proof}
    The lemma follows immediately from the definition of $\mathrm{SO}(3)$-equivariant mappings and $\mathrm{SO}(3)$-invariant distributions.
\end{proof}

\begin{theorem}[$\mathrm{SO}(3)$-invariance (formal version of \cref{thm.invariance})]
    Consider \cref{alg:unified}. Assume the target predictor $f_c(\mG)$ is $\mathrm{SO}(3)$-invariant, the flow model $g_{\theta}(\mG)$ is $\mathrm{SO}(3)$-equivariant, and the distribution of $\mG_0$ is $\mathrm{SO}(3)$-invariant \textit{w.r.t.} the atomic coordinates $\mX$. Let $W=1/\Delta t$. Then for any $w\in\{0, \cdots, W\}$, the distribution of $\mX_{w\Delta t}$ is $\mathrm{SO}(3)$-invariant.
\label{thm.invariance-app}
\end{theorem}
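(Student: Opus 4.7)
The plan is to prove the statement by induction on $w$. For the induction to close cleanly I would strengthen the hypothesis slightly: the joint law of $(\mX_{w\Delta t}, \va_{w\Delta t})$ is $\mathrm{SO}(3)$-invariant in the continuous argument, i.e., $(\mS\mX_{w\Delta t}, \va_{w\Delta t})$ and $(\mX_{w\Delta t}, \va_{w\Delta t})$ have the same law for every $\mS \in \mathrm{SO}(3)$. The base case $w=0$ is immediate because $\va_0$ is deterministically the mask vector and $\mX_0$ is drawn from an $\mathrm{SO}(3)$-invariant distribution by assumption.

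For the inductive step, I would view one pass through the while loop of Algorithm \ref{alg:unified} as a randomized map $\phi$ taking $(\mX_{w\Delta t}, \va_{w\Delta t})$ to $(\mX_{(w+1)\Delta t}, \va_{(w+1)\Delta t})$, and establish two facts: (i) for any fixed realization of the auxiliary randomness (the samples $\{\va_{1|t,k}\}_{k=1}^K$, the categorical draws defining $a_{t+\Delta t}^{(i)}$, and the selection of $\va_{1|t}$), the deterministic map $\mX_{w\Delta t} \mapsto \mX_{(w+1)\Delta t}$ with $\va_{w\Delta t}$ held fixed is $\mathrm{SO}(3)$-equivariant; and (ii) the law of this auxiliary randomness does not change when $\mX_{w\Delta t}$ is rotated. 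Applying Lemma \ref{lemma.equivariance-push-forward} conditionally, these two facts together with the inductive hypothesis give $\mathrm{SO}(3)$-invariance of the joint law of $(\mX_{(w+1)\Delta t}, \va_{(w+1)\Delta t})$.

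Claim (ii) follows from the assumptions: rotating $\mX_t$ leaves the atom-type distribution $p(\va_{1|t})$ produced by $g_\theta$ unchanged ($g_\theta$ equivariance acts only on the continuous output), and invariance of $f_c$ means the estimated rate matrix $\hat R(a_t^{(i)}, b)$ and the categorical weights used to draw $\va_{1|t}$ are identical under any rotation of $\mX_t$. Claim (i) is a chained equivariance argument: the scalar map $\mX_t \mapsto \log f_c(g_\theta(\mX_t, \va_t)_{\mX}, \va_{1|t})$ is $\mathrm{SO}(3)$-invariant (composing equivariance of $g_\theta$ in the continuous part with invariance of $f_c$), so its gradient in $\mX_t$ is $\mathrm{SO}(3)$-equivariant by Lemma \ref{lemma.invariant-grad}; the identity map is equivariant; and $\mathrm{Project}_\Gamma$ is equivariant since $\mathrm{Project}_\Gamma(\mS\mX) = \mS\mX - \tfrac{1}{n}\sum_i \mS\vx^{(i)}\vone_n^{\top} = \mS\,\mathrm{Project}_\Gamma(\mX)$. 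By Lemma \ref{lemma.equivariance-composition-sum}, one inner iterate $\mX_t \mapsto \mathrm{Project}_\Gamma(\mX_t + \rho_t \nabla_{\mX_t}\log f(g_\theta(\mX_t,\va_t)_\mX, \va_{1|t}))$ is equivariant, and so is the $N_{\text{iter}}$-fold composition. The final velocity update $\vx_{t+\Delta t}^{(i)} = \vx_t^{(i)} + \Delta t\,(g_\theta(\mX_t, \va_t)_\mX^{(i)} - \vx_t^{(i)})/(1-t)$ followed by projection is again equivariant by the same lemmas, closing the induction.

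The main obstacle is not any single algebraic manipulation but the bookkeeping needed to cleanly separate the continuous dependence (where equivariance must be tracked through $g_\theta$, the gradient step, and the projection) from the discrete dependence (where only distributional invariance of the auxiliary randomness under rotation is required). Once the auxiliary randomness is conditioned on and the update is viewed as a parametrized deterministic map in $\mX_t$, the argument reduces to repeated applications of Lemmas \ref{lemma.invariant-grad}, \ref{lemma.equivariance-composition-sum}, and \ref{lemma.equivariance-push-forward}.
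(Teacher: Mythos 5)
Your proof is correct and follows the same skeleton as the paper's: induction on $w$, verification that each elementary map applied to $\mX$ (projection, the guidance gradient step, the Euler step) is $\mathrm{SO}(3)$-equivariant via Lemmas \ref{lemma.invariant-grad} and \ref{lemma.equivariance-composition-sum}, and a push-forward argument via Lemma \ref{lemma.equivariance-push-forward}. You do, however, handle two points more carefully than the paper does. First, you strengthen the inductive hypothesis to invariance of the joint law of $(\mX_{w\Delta t}, \va_{w\Delta t})$ rather than just the marginal of $\mX_{w\Delta t}$; this is the right move, because the maps $h_1, \dots, h_L$ in the paper's proof depend on $\va_t$, so the composition is not a well-defined deterministic map on $\mX$ alone and the push-forward lemma cannot be applied directly to the marginal. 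Second, you explicitly condition on the auxiliary randomness in the loop (the Monte-Carlo samples $\va_{1|t,k}$, the categorical draws for $a^{(i)}_{t+\Delta t}$, and the selected $\va_{1|t}$) and separately verify that its conditional law is unaffected by rotating $\mX_t$ -- because $g_\theta$'s discrete output is invariant, its continuous output is equivariant, and $f_c$ is invariant. The paper's proof treats the one-step update as if it were a single deterministic equivariant map, which elides exactly this point. Your version is therefore a cleaner and more complete rendering of the same argument.
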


\begin{proof}
    We prove by induction on $w$.

    If $w=0$, then the distribution of $\mX_{w\Delta t}$ is $\mathrm{SO}(3)$-invariant according to the assumption.

    Assume that the distribution of $\mX_{(w-1)\Delta t}$ is $\mathrm{SO}(3)$-invariant for some $w\in\{1, \cdots, W\}$. We show that the distribution of $\mX_{w\Delta t}$ is $\mathrm{SO}(3)$-invariant.

    We note that in \cref{alg:unified}, $\mX_{w\Delta t}$ is generated from $\mX_{(w-1)\Delta t}$ via a series of mappings: $\mX_{w\Delta t} = h_{L} \circ \cdots \circ h_1 (\mX_{(w-1)\Delta t})$, where each $h_i$ is one of the following:
    \begin{itemize}
        \item Projection: $\mX \mapsto \mathrm{Project}_{\Gamma}(\mX)$.
        \item Guidance based on the target predictor: $\mX \mapsto \mX + \rho_t \nabla_{\mX}  \log f_c(g_\theta(\mX, \va_t)_{\mX},  \va_{1|t})$.
        \item Euler step: $\left[\vx^{(i)}\right]_{i=1}^n \mapsto \left[\vx^{(i)} + \vv_{t|1}(\vx^{(i)}|\vx_1^{(i)})\Delta t\right]$.
    \end{itemize}
    We show that all these mappings are $\mathrm{SO}(3)$-equivariant.

    \paragraph{Projection.} By definition of $\mathrm{Project}_{\Gamma}$ (in \cref{sec:preliminary-se3}), for any $\mS\in\mathrm{SO}(3)$,
    \begin{equation}
        \mathrm{Project}_{\Gamma}(\mS\mX)=\mS\mX - \frac{1}{n}\sum\limits_{i=1}^n \mS\vx^{(i)} \vone_{n}^{\mathsf T} = \mS\left(\mX - \frac{1}{n}\sum\limits_{i=1}^n \vx^{(i)} \vone_{n}^{\mathsf T}\right) = \mS\mathrm{Project}_{\Gamma}(\mX).
    \end{equation}

    \paragraph{Guidance based on the target predictor.} According to the assumption, $\mX \mapsto \log f_c(\mX, \va_{1|t})$ is $\mathrm{SO}(3)$-invariant. Thus, by \cref{lemma.invariant-grad}, its gradient is $\mathrm{SO}(3)$-equivariant. Also note that $\mX\mapsto g_\theta(\mX, \va_t)_{\mX}$ is $\mathrm{SO}(3)$-invariant. Thus, by \cref{lemma.equivariance-composition-sum}, $\mX \mapsto \mX + \rho_t \nabla_{\mX}  \log f_c(g_\theta(\mX, \va_t)_{\mX},  \va_{1|t})$ is $\mathrm{SO}(3)$-equivariant.

    \paragraph{Euler step.} By definition of $\vv_{t|1}(\vx^{(i)}|\vx_1^{(i)})$ (in \cref{eq.guided-v}), we can easily check the $\mathrm{SO}(3)$-equivariance of $\left[\vx^{(i)}\right]_{i=1}^n \mapsto \left[\vx^{(i)} + \vv_{t|1}(\vx^{(i)}|\vx_1^{(i)})\Delta t\right]$.

    To sum up, $h_{1},\cdots, h_L$ are all $\mathrm{SO}(3)$-equivariant. By \cref{lemma.equivariance-composition-sum}, $h_{L} \circ \cdots \circ h_1$ is $\mathrm{SO}(3)$-equivariant.

    Recall that the distribution of $\mX_{(w-1)\Delta t}$ is $\mathrm{SO}(3)$-invariant. Therefore, by \cref{lemma.equivariance-push-forward}, the distribution of $\mX_{w\Delta t}$ is also $\mathrm{SO}(3)$-invariant. We conclude the proof by induction.    
\end{proof}




    

\newpage
\section{Discussion of Related Work}
\label{app:related_works}

\subsection{General Discussion}

\paragraph{Diffusion and Flow Matching}
Diffusion models~\citep{sohl2015deep,song2019generative,song2020score,song2021maximum,dhariwal2021diffusion,song2020improved,karras2022elucidating,hoogeboom2023simple,han2022card,qin2023class} have demonstrated exceptional performance across a range of generative modeling tasks, including image and video generation~\citep{saharia2022photorealistic,ho2022video,zhang2024hive}, audio generation~\citep{kong2020diffwave}, and 3D geometry generation~\citep{luo2021diffusion,luo2021score,xu2022geodiff,luo2022antigen}, among others.
In contrast, flow-based generative methods~\citep{liu2022flow, albergo2022building, lipman2022flow} present a more streamlined alternative to diffusion models~\citep{sohl2015deep, song2019generative,ho2020denoising, song2020score}, bypassing the need for forward and backward diffusion processes. Instead, they focus on noise-data interpolants~\citep{albergo2023stochastic}, simplifying the generative modeling process and potentially leading to more optimal probability paths with fewer sampling steps~\citep{liu2022flow}. Flow matching, which employs ODE-based continuous normalizing flows~\citep{chen2018neural}, further refines this approach.
Conditional flow matching (CFM)~\citep{lipman2022flow,liu2022flow,albergo2022building} learns the ODE that maps the probability path from the prior distribution to the target by regressing the push-forward vector field conditioned on individual data points. Riemannian flow matching~\citep{chen2023riemannian} extends CFM to operate on general manifolds, reducing the need for costly simulations~\citep{ben2022matching,de2022riemannian,huang2022riemannian}.
Recent advancements in the rectified flow framework~\citep{liu2022flow}, which focuses on learning linear interpolations between distributions, have demonstrated notable improvements in both efficiency~\citep{liu2023instaflow} and quality~\citep{esser2024scaling,yan2024perflow}, particularly in text-to-image generation. Our work aligns with these trends by exploring how to guide generation within this simple and general framework in a training-free manner~\citep{tfg}.

\paragraph{Multimodal generative models.} Diffusion models have demonstrated great success in modeling continuous data; however, many real-world applications involve multimodal data, such as tabular data~\citep{kotelnikov2023tabddpm}, graph data~\citep{qin2023sparse}, and scientific data~\citep{moldiff}. A key challenge in this domain is generating discrete data within the diffusion framework. Unlike large language models (LLMs)~\citep{achiam2023gpt,lin2024selecting,team2023gemini,ke2023continual}, which excel at modeling discrete text, diffusion models have struggled in this area. \citet{li2022diffusion} introduced continuous language diffusion models, which embed tokens in a latent space and use nearest-neighbor dequantization for generation. Subsequent research has enhanced performance through alternative loss functions~\citep{han2022ssd,mahabadi2023tess} and by incorporating conditional information, such as infilling masks~\citep{gong2022diffuseq,dieleman2022continuous}. \citet{gulrajani2024likelihood} further improved language diffusion models by making various refinements to the training process, allowing their performance to approach that of autoregressive LMs. While continuous approaches to discrete data modeling have seen advancements~\citep{richemond2022categorical,han2022ssd,chen2022analog,strudel2022self,floto2023diffusion}, discrete diffusion methods like D3PM~\citep{austin2021structured} and subsequent works~\citep{zheng2023reparameterized,chen2023fast,ye2023dinoiser} have demonstrated greater efficiency. Recent innovations, such as SEDD~\citep{lou2023discrete}, extend score matching to discrete spaces, improving language modeling to a level competitive with autoregressive models. Additionally, DFM~\citep{campbell2024generative} applies continuous-time Markov chains to enable discrete flow matching, contributing to the Multiflow framework by integrating continuous flow matching. Overall, multimodal generation is still an important open problem, and our work lays a theoretical foundation for multimodal guided flow, which will be beneficial for the study of multimodal generation.

\paragraph{Generative Model for Molecular Generation.} Generative models have been applied to design drugs such as small molecules~\citep{ramakrishnan2014quantum,axelrod2022geom,crossdocked,irwin2020zinc20}, proteins~\citep{berman2000protein,kryshtafovych2021critical,haas2018continuous}, peptide~\citep{wang2024multi}, and antibodies~\citep{jin2021iterative}. Our work focuses on  designing small molecules, while the method can also be easily adapted to proteins, peptide and antibodies which requires multimodal generation. Molecular design can be categorized into target-agnostic design~\citep{huang2023learning,xu2022geodiff,xu2023geometric,morehead2024geometry}, where the goal is to generate valid sets of molecules without consideration for any biological target, and target-aware design~\citep{li2023druggpt, masuda2020generating,targetdiff,peng2022pocket2mol,schneuing2022structure}. Our work considers both types of molecule design tasks.

\paragraph{Guidance for Diffusion and Flow Models.} Since the introduction of classifier guidance by \citet{dhariwal2021diffusion}, which employs a specialized time-dependent classifier for diffusion models, significant progress has been made in applying guidance to these models. Early research focused on straightforward objectives for linear inverse problems such as image super-resolution, deblurring, and inpainting\citep{dps,wang2022zero,zhu2023denoising}. These approaches were later extended by leveraging more flexible time-independent classifiers, achieved through approximations of the time-dependent score using methods like Tweedie’s formula~\citep{yu2022scaling} and Monte Carlo sampling~\citep{lgd}.
Recent methods, such as FreeDoM~\citep{freedom}, UGD~\citep{ugd}, and TFG~\citep{tfg}, incorporate various techniques to enhance the performance of training-free classifier guidance, leading to more advanced forms of guidance. However, some of these techniques lack clear theoretical support. A more recent work by~\citet{shen2024understanding} aims to improve the performance of training-free guidance by drawing on ideas from adversarial robustness. Our work provides a rigorous theoretical framework for the design choices in training-free guidance, which we believe will serve as a strong foundation for future empirical research.
In addition to classifier guidance, there are related works focusing on guiding the generation of specific diffusion models. For instance, DOODL~\citep{wallace2023end}, DNO~\citep{karunratanakul2024optimizing}, and D-Flow~\citep{ben2024d} utilize invertible models or flow models to backpropagate gradients to the latent noise. These methods emphasize noise optimization and often involve a training process tailored to a specific target, potentially making them slower than training-free classifier guidance.
Additionally, RectifID~\citep{sun2024rectifid} is a concurrent work exploring training-free guidance on rectified flow for personalized image generation. However, we have identified several theoretical issues in this work, which will be discussed in the following paragraph. This method addresses the training-free guidance via a fixed-point formulation, which is quite different from us and can only apply to continuous data.

\subsection{Theoretical Issues with RectifID.} 
\label{app:rectifID}

Similar to many training-free guidance methods, RectifID aims to bypass the noise-aware classifier, which is typically trained according to the noise schedule of flow or diffusion models (as discussed in Equation 7 of their paper). They approach this by employing a fixed-point formulation:

\begin{equation}
\vz_1 = \vz_0 + \vv(\vz_1,1) + s\cdot\nabla_{\vz_1}\log p(c|\vz_1),
\end{equation}

which corresponds to Equation 8 in their paper. \rebuttal{This equation is derived under the assumption that the rectified flow follows a linear path $\vz_t = \vz_0+\vv(\vz_t,t)t$, which is strong.} Since the velocity $\vv(\vz_t, t)$ is parameterized by a flow network and $\log p(c|\vz_1)$ can be estimated using a time-independent classifier, this formulation implies that $\vz_1$ can be obtained by solving a fixed-point problem. However, we think that when solving this fixed-point problem, the estimate of $\log p(c|\vz_1)$ from a time-independent classifier may be unreliable because some values of $\vz_1$ could be out-of-distribution for the classifier. 

\subsection{The Motivation of Studying Training-Free Guidance}

As training-free guidance is an emerging area in generative model research, this section highlights its significance as a critical and timely topic that warrants greater attention and effort, particularly in leveraging off-the-shelf models.

\paragraph{Comparison with Conditional Generative Models.}
Conditional generative models, such as Cond-EDM and Cond-Flow, require labeled data for training. This reliance on annotated datasets can limit training efficiency and scalability. In contrast, training-free guidance allows the use of any unconditional generative model, which can be trained in an unsupervised manner. This flexibility enables the scaling of generative models without the need for task-specific annotations. Furthermore, conditional generative models are tied to predefined tasks during training, making them unsuitable for new tasks post-training.

\paragraph{Comparison with Classifier Guidance.}
Classifier guidance~\citep{dhariwal2021diffusion} involves training a time-dependent classifier aligned with the noise schedule of the generative model. While this approach can utilize pre-trained unconditional flow or diffusion models, it still requires labeled data for classifier training on a per-task basis. This constraint makes it less adaptable for scenarios where the target is defined as a loss or energy function without associated data, or when leveraging pre-trained foundation models for guidance. In contrast, training-free guidance offers greater flexibility, allowing for broader and more dynamic applications without the need for task-specific classifier training.

\paragraph{Comparison with Classifier-Free Guidance.}
Classifier-free guidance~\citep{ho2022classifier} has become a popular approach for building generative foundation models. However, it still requires task definitions to be determined prior to model training. A more versatile alternative is instruction tuning, where text instructions act as an interface for a variety of user-defined tasks. As shown by~\citet{tfg}, training-free guidance surpasses text-to-X models (e.g., text-to-image or text-to-video) in flexibility and robustness, as demonstrated by two failure cases of GPT-4. These models often struggle with interpreting complex targets in text form or constraining generated distributions using simple text prompts. Training-free guidance offers a more powerful and adaptable mechanism for controlling the generation process, enabling a broader range of user-defined targets and more precise control over outputs.

\paragraph{Applications of training-free guidance.}
In motif scaffolding, training-free guidance can direct the generated samples to possess a given motif without requiring a trained classifier~\citep{didi2023framework}. Similarly, in the \emph{symmetric oligomer design} and \emph{enzyme design with concave pockets} tasks in RFDiffusion~\citep{watson2023novo}—a foundational model for protein design—training-free guidance is employed (referred to as “inference with external potentials”). Beyond biology, training-free guidance also extends to tasks in other domains, such as image, audio, and motion. Examples include deblurring, super-resolution, style transfer, audio declipping, obstacle avoidance, to name just a few. Notably, this approach eliminates the need for label-conditioned data collection or task-specific training when an off-the-shelf generator and target predictor are available. For further insights into its applications, we direct readers to TFG~\citep{tfg}, which highlights 16 tasks across 40 targets leveraging training-free guidance.

\subsection{Discussion of Continuous Part Modeling in Multiflow}
\label{app:mflow_assumption}

In this section, we examine the implicit assumptions underlying Multiflow~\citep{campbell2024generative}, which serves as a foundational basis for our method. While the derivation in Multiflow adopts a unified approach to modeling the continuous component $\mX_{1|t}$ and the discrete component $\va_{1|t}$, its practical implementation differs significantly. Specifically, the likelihood of the discrete component $\va_{1|t}$ can be directly modeled using the output of a neural network, whereas the modeling of the continuous component $\mX_{1|t}$ presents greater challenges.

Recall the key component of Multiflow is the computation of unconditional velocity~\cref{eq.proposition}. We copy it here for convenience.

\begin{equation*}
    \vv_t(\vx_t) = \E_{p_{1|t}(\mG_1 | \mG_t)}\left[\vv_{t|1}(\vx_t| \vx_1) \right]; \quad
    R_t(a_t, b) = \E_{p_{1|t}(\mG_1 | \mG_t)}\left[R_{t|1}(a_t, b| a_1) \right]. 
\end{equation*}

To compute the continuous velocity $\vv_t(\vx_t)$, one could sample multiple $\vx_{1|t}$ from $p_{1|t}(\cdot|\mG_t)$ and estimate the expectation $\vv_{t|1}(\vx_t|\vx_1)$ using Monte Carlo methods, as direct computation of the expectation involves intractable integrals. However, in Multiflow’s and our implementation, the model predicts the expectation $\mathbb{E}_{1|t}[\vx_{1|t}|\mG_t]$. Consequently, $\vv_t(\vx_t) = \vv_{t|1}(\vx_t|\mathbb{E}_{1|t}[\vx_{1|t}|\mG_t])$, where $\mathbb{E}_{1|t}[\vx_{1|t}|\mG_t]$ is parameterized by the output of the flow model (a neural network).

This assumption does not compromise the theoretical validity of our theorems or those in the Multiflow paper since $\vv_{t|1}$ is linear $w.r.t.$ $\vx_1$ as defined in~\cref{eq.conditional-velocity-rate-matrix}: $\vv_{t|1}(\vx_t|\vx_1) = \frac{\vx_1-\vx_t}{1-t}$. And computing the expectation of $\mathbb E_{p_{1|t}(\mG_1|\mG_t)}\vv_{t|1}(\vx_t|\vx_1)$ is equivalent to $\vv_{t|1}(\vx_t|\mathbb E_{1|t}[\vx_{1|t}|\mG_t])$. Therefore, the linearity allows us to model the expectation only.


\newpage
\section{Experimental Details}
\label{app:experimental_details}

\subsection{Target-agnostic Design Setting}

We conduct experiments in this setting following EDM~\citep{hoogeboom2022equivariant}, where the generative process is by first sampling $c, n\sim p(c, n)$, where $c$ is the target property and $n$ is the number of atoms, and then samples the molecule $\mG$ given these two information. We compute $c,n\sim p(c,n)$ on the training partition as a parameterized two dimensional categorical distribution where we discretize the continuous variable $c$ into small uniformly distributed intervals.

For structure similarity, we use all the structures in the unseen half of training set for generative model and guidance predictor. The Tanimoto coefficient results are averaged over all the structures.

\subsection{Implementation of \method}
\label{app:network_results}

\paragraph{Network configurations.} To train Multiflow on target-agnostic small molecular generation, we follow EDM~\citep{hoogeboom2022equivariant} to use an EGNN with 9 layers, 256 features per hidden layer and SiLU activation functions. We use the Adam optimizer with learning rate $10^{-4}$ and batch size 256. Only atom types (discrete) and coordinates (continuous) have been modelled, which is different from unconditional EDM that includes atom charges. 
All models have been trained for 1200 epochs (for QM9) and 20 epochs (for GEOM-Drug and CrossDocked) while doing early stopping by evaluating the validation loss. For target-aware molecular design, we follow TargetDiff~\citep{targetdiff} to use EGNN with $k$NN graph, where $k=32$ and reduces the batch size to 16 and the hidden layer dimension to 128. For target predictors, we train a 6-layer discriminative EGNN by adding linear head on the average pooling of output node feature. In our implementation, we only model heavy atoms.

\paragraph{Hyperparameter searching.} As noted by TFG~\citep{tfg}, hyperparameter searching is the key problem for training-free guidance. TFG designs complicated hyperparameter space where three hyperparameters (which consist of a scalar and a scheduling strategy) and two iteration parameters should be determined. Our \method only has the four hyperparameters $N_{\mathrm{iter}}, \rho, \tau, K$ to be tuned, and we show that a logarithmic scale of $K$ is sufficient, and $N_{\mathrm{iter}}$ can be set according to the computational resources. So we fix $N_{\mathrm{iter}}=4$ and $K=512$ in our experiments, while grid search the $\rho$ and $\tau$ for different applications. The search space is defined by first find the appropriate magnitude, as different applications have varied order of value magnitude. After determining the appropriate magnitude, we double $\rho$ and $\tau$ for 4 times to construct the search space. For example, the appropriate search space for polarizability is $(\rho,\tau)$ is $\{0.01, 0.02, 0.04, 0.08\} \times \{10, 20, 40, 80\}$. We then search the space via small scale generation (the number of 512 following~\citet{tfg}). The search target is to minimize the guidance performance (e.g., MAE in quantum property guidance), while keeping a validity over 75\%.

\subsection{Baselines}
\label{app:baseline_impelementation}

We introduce the baselines used in this paper here. For baselines in target-agnostic molecular design, 

\begin{itemize}
    \item \textbf{Upper bound}: This baseline is from EDM~\citep{hoogeboom2022equivariant}, which removes any relation between molecule and property by shuffling the property labels in the unseen half of QM9 training set for oracle target predictor, and evaluate MAE on it. If a baseline outperforms upper bound, then it should be able to incorporate conditional property information in to the generated molecules.
    \item \textbf{\#Atoms}: ``\#Atoms'' predicts the molecular properties by only using the number of atoms in the molecule. If a baseline outperforms ``\#Atoms'', it should be able to incorporate conditional property information into the generated molecular structure beyond the number of atoms.
    \item \textbf{Lower bound}: The MAE of directly predicting property using the oracle target predictor.
    \item \textbf{Cond-EDM}: The conditional EDM is implemented by simply inputting a property $c$ to the neural network of EDM~\citep{hoogeboom2022equivariant}, without changing the diffusion loss.
    \item \textbf{EEGSDE}: EEGSDE~\citep{bao2022equivariant} trains a time-dependent classifier to guide the conditional EDM. This is the best method for guided generation in quantum guidance task.
    \item \textbf{DPS}: DPS~\citep{dps} estimate the time-dependent gradient $\nabla_{\vx_t} \log p(\vx_t,t)$ with $\nabla_{\vx_t}\log p(\vx_{0|t})$ using Tweedie's formula (here $\vx_0$ means the clean data in diffusion formulation, which is different from flow where $\vx_1$ is the clean data). 
    \item \textbf{LGD}: LGD~\citep{lgd} replaces the point estimation $\vx_{0|t}$ with Gaussian kernel $\mathcal N(\vx_{0|t}, \sigma^2 \mI)$. LGD uses Monte-Carlo sampling to estimate the expectation of $\nabla_{\vx_t}\log p(\vx_{0|t})$.
    \item \textbf{FreeDoM}: FreeDoM~\citep{freedom} generalizes DPS by introducing a ``time-travel strategy'' that iteratively denoises $\vx_{t-1}$ from $\vx_t$ and adds noise to $\vx_{t-1}$ to regenerate $\vx_t$ back and forth.
    \item \textbf{MPGD}: MPGD~\citep{mpgd} computes the gradient $\nabla_{\vx_{0|t}}\log f(\vx_{0|t})$ instead of $\nabla_{\vx_{t}}\log f(\vx_{0|t})$ to avoid back-propagation through the diffusion model.
    \item \textbf{UGD}: UGD~\citep{ugd} is similar to FreeDoM, which additionally solves a backward optimization problem and guides $\vx_{0|t}$ and $\vx_t$ simultaneously.
    \item \textbf{TFG}: TFG~\citep{tfg} unifies all the techniques in training-free guidance, and turn the problem into hyperparameter searching.
    \item \textbf{Cond-Flow}: Conditional flow is implemented by inputting the target value of property $c$ to Multiflow. The modification is the same as Cond-EDM with respect to EDM.
\end{itemize}

For baselines in target-aware molecular design,

\begin{itemize}
    \item \textbf{AR}: AR~\citep{luo20213d} is based on GNN and generates atoms into a protein pocket in an autoregressive manner. 
    \item \textbf{Pocket2Mol}: Pocket2Mol~\citep{peng2022pocket2mol} generates atoms sequentially but in a more fine-grained manner by predicting and sampling atoms from frontiers.
    \item \textbf{TargetDiff}: TargetDiff~\citep{targetdiff} generates the molecules via diffusion, which is similar to EDM that incorporates pocket information as inputs.
    \item \textbf{Multiflow}: We follow TargetDiff to adapt target-agnostic Multiflow to target-aware Multiflow. The simplex $\Gamma$ projection becomes to substracting the center of gravity $w.r.t.$ the protein pocket.
\end{itemize}

\subsection{Dataset Preprocessing}
\label{app:dataset}

We introduce the dataset information for our expeimernts.

\paragraph{QM9.} The QM9 dataset is a widely-used resource for benchmarking models in the field of molecular generation, particularly for 3D structures. This dataset comprises about 134,000 molecules, each containing up to 9 heavy atoms (not counting hydrogen atoms), derived from a subset of the GDB-17 database which itself is a database of 166 billion synthetic molecules. Each molecule in the QM9 dataset is characterized by its chemical properties and 3D coordinates of atoms. In the specific usage of QM9 as mentioned, the dataset was divided into training, validation, and test sets with sizes of 100,000, 18,000, and 13,000 molecules, respectively, following the setup used by the EDM. We use the QM9 dataset from Huggingface Hub\footnote{\url{https://huggingface.co/datasets/yairschiff/qm9}}.

\paragraph{GEOM-Drug.} The GEOM-Drug dataset is tailored for applications in drug discovery, providing a more complex and realistic set of molecules for the development and benchmarking of molecular generation models. This dataset was specifically chosen to train and evaluate models that are intended to simulate and understand real-world scenarios in drug design. GEOM-Drug differs from simpler datasets like QM9 by focusing on molecules that are more representative of actual drug compounds. These molecules often contain major elements found in drugs, such as carbon (C), nitrogen (N), oxygen (O), fluorine (F), phosphorus (P), sulfur (S), and chlorine (Cl). We follow MolDiff~\citep{moldiff} to exclude hydrogen atoms and minor element types such as boron (B), bromine (Br), iodine (I), silicon (Si), and bismuth (Bi). We use the opensourced version of GEOM-Drug processed by MolDiff codebase\footnote{\url{https://github.com/pengxingang/MolDiff}}, which filters the molecules through several criterion. The selection criteria were as follows: 

\begin{itemize}
    \item molecules must be compatible with the RDKit package;
    \item structures should be intact; 
    \item molecules should contain between 8 and 60 heavy atoms; 
    \item only elements such as H, C, N, O, F, P, S, and Cl were allowed; 
    \item permissible chemical bonds included single, double, triple, and aromatic bonds. 
\end{itemize}

Following these criteria, hydrogen atoms were excluded, and the molecules were divided into training, validation, and testing datasets containing 231,523, 28,941, and 28,940 molecules, respectively.

\paragraph{CrossDocked2020.} The Crossdocked dataset initially comprises 22.5 million poses of small molecules docked to proteins, featuring 2,922 unique protein pockets and 13,839 unique small molecules. We utilized non-redundant subsets that were filtered and processed from the TargetDiff codebase\footnote{\url{https://github.com/guanjq/targetdiff}}. These subsets include 100,000 protein-molecule pairs for training and 100 pairs for testing.

\subsection{Evaluation Metrics}
\label{app:evaluation_metrics}

We introduce the evaluation metrics used in this paper.

\begin{itemize}
    \item \textbf{MAE (Mean Absolute Error)}: A measure of the average magnitude of the absolute errors between the predicted values and the actual values, without considering their direction. It's calculated as the average of the absolute differences between the predictions and the actual outcomes, providing a straightforward metric for regression model accuracy.
    
    \item \textbf{Tanimoto coefficient}: A metric used to compare the similarity and diversity of sample sets. It is defined as the ratio of the intersection of two sets to their union, often used in the context of comparing chemical fingerprints in computational chemistry.

    \item \textbf{Vina score}: A scoring function specifically designed for predicting the docking of molecules, particularly useful in drug discovery. It estimates the binding affinity between two molecules, such as a drug and a receptor, helping in the identification of potential therapeutics.

    \item \textbf{SA score (Synthetic Accessibility score)}: A heuristic measure to estimate the ease of synthesis of a given molecular structure. Lower scores indicate molecules that are easier to synthesize, useful in the design of new compounds in medicinal chemistry.

    \item \textbf{QED (Quantitative Estimate of Drug-likeness)}: A metric that quantifies the drug-likeness of a molecular structure based on its physicochemical properties and structural features, aiming to identify compounds that have desirable attributes of a potential drug.
\end{itemize}

We also test other metrics related to generation quality following EDM~\citep{hoogeboom2022equivariant} and MolDiff~\citep{moldiff} in~\cref{app:experimental_results}:

\begin{itemize}
    \item \textbf{Validity}: Measures the percentage of generated molecules that are chemically valid (can be parsed by RDKit). 

    \item \textbf{Atom stability}: Assesses whether the atoms have the right valency.

    \item \textbf{Molecular stability}: Assesses whether all the atoms are stable in the molecule. 

    \item \textbf{Uniqueness}: Quantifies the uniqueness of generated molecules. 

    \item \textbf{Novelty}: Measures how novel the generated molecules are when compared to the training set.

    \item \textbf{Connectivity}: Checks if all atoms are connected in the molecule. We use the lookup table from EDM to infer the chemical bonds.
\end{itemize}

{
\subsection{Leveraging Pre-trained Foundation Models as Target Predictor}
\label{app:unimol}

In our main experiments, the target predictors were trained specifically for guidance purposes to ensure a direct and fair comparison with existing baselines such as TFG~\citep{tfg}. However, a significant advantage of training-free guidance methods like \textbf{TFG-Flow} is their ability to leverage \emph{pre-trained foundation models} as target predictors without additional training. This capability not only enhances performance but also reduces computational resources and time.

To demonstrate this advantage, we conducted additional experiments using \textbf{UniMol} \citep{unimol}, an open-source universal 3D molecular foundation model applicable to various downstream tasks. UniMol is a powerful pre-trained model that can predict a wide range of molecular properties. We used the fine-tuned UniMol model to predict molecular properties, serving as our guidance network in the experiments corresponding to Table \ref{tab:qm9}.

\begin{table}[h]
\centering
\caption{Conditional generation results on the QM9 dataset using pre-trained UniMol as the target predictor. Lower values are better for all metrics.}
\label{tab:qm9}
\begin{tabular}{lcccccc}
\toprule
\textbf{Method} & $C_v\downarrow$ & $\mu\downarrow$ & $\alpha\downarrow$ & $\Delta \epsilon\downarrow$ & $\epsilon_{\text{HOMO}}\downarrow$ & $\epsilon_{\text{LUMO}}\downarrow$ \\
\midrule
Cond-Flow                    & 1.52 & 0.962 & 3.10 & 805 & 435 & 693 \\
TFG-Flow (Vanilla predictor) & 1.48 & 0.880 & 3.52 & 917 & 364 & 998 \\
TFG-Flow (UniMol)           & 1.12 & 0.802 & 2.89 & 585 & 312 & 587 \\
EEGSDE                       & 0.941 & 0.777 & 2.50 & 487 & 302 & 447 \\
\bottomrule
\end{tabular}
\end{table}

As shown in Table~\ref{tab:qm9}, incorporating a strong pre-trained target predictor like UniMol significantly enhances the performance of TFG-Flow across all evaluated molecular properties. Specifically, the Mean Absolute Error (MAE) decreases notably when using UniMol compared to the vanilla predictor trained specifically for guidance. The performance gap between our training-free TFG-Flow (with UniMol) and the training-based EEGSDE, which requires training both a conditional generative model and a time-dependent predictor, has notably decreased.

These findings highlight several key advantages:
\begin{itemize}
    \item \textbf{Improved Performance.} The integration of a strong pre-trained model like UniMol boosts the performance of TFG-Flow, achieving results closer to or competitive with training-based methods.
    \item \textbf{Flexibility.} Training-free guidance methods can seamlessly incorporate any off-the-shelf pre-trained models, including those not originally designed for the guidance task. This flexibility is particularly beneficial when high-quality pre-trained models are available.
    \item \textbf{Reduced Computational Effort.} By leveraging pre-trained models, we eliminate the need to train additional target predictors or time-dependent classifiers, saving both time and computational resources.
\end{itemize}

Our experiments demonstrate that TFG-Flow benefits significantly from incorporating pre-trained foundation models as target predictors. This capability enhances performance, narrows the gap with training-based methods, and underscores the practical advantages of training-free guidance.
}

\subsection{Hardware and Software Specifications}

We run most of the experiments on clusters using NVIDIA A800s with 128 CPU cores and 1T RAM. We implemented our experiments
using PyTorch, RDKit, and the HuggingFace library. Our operating system is based on Ubuntu 20.04 LTS.

\newpage
\section{Detailed Experimental Results}
\label{app:experimental_results}

\subsection{Conditional Flow}
\label{app:conditional_flow}

\begin{table}[h]
\centering
\caption{The full results for conditional flow (Cond-flow) on QM9 quantum property guidance. The results are averaged over 3 random seeds, where standard deviations are reported in~\cref{tab.std_qm9_condflow}.}
\resizebox{\textwidth}{!}{\begin{tabular}{cccccccc} 
\toprule
      & \textbf{Validity$\uparrow$} & \textbf{Uniqueness$\uparrow$} & \textbf{Novelty$\uparrow$} & \textbf{Mol. stability$\uparrow$} & \textbf{Atom stability$\uparrow$} & \textbf{Connectivity$\uparrow$} & \textbf{MAE$\downarrow$}  \\ 
\midrule
$\alpha$ & 78.03\%             & 98.78\%               & 87.75\%             & 89.54\%                       & 98.49\%                   & 69.55\%                 & 3.10          \\
$\mu$    & 83.60\%             & 97.81\%               & 86.71\%             & 92.87\%                       & 99.12\%                   & 74.40\%                 & 0.96         \\
$C_v$    & 64.96\%             & 98.78\%             & 85.48\%            & 81.20\%                       & 96.89\%                   & 57.91\%                 & 1.52          \\
$\Delta\varepsilon$   & 75.95\%            & 97.42\%              & 87.26\%             & 88.41\%                       & 98.52\%                   & 64.86\%                 & 804.03        \\
$\varepsilon_{\HOMO}$  & 87.93\%             & 95.90\%               & 89.14\%             & 94.51\%                       & 99.32\%                   & 73.72\%                 & 436.02         \\
$\varepsilon_{\LUMO}$  & 80.76\%             & 98.46\%               & 90.57\%             & 91.58\%                       & 99.00\%                   & 67.42\%                 & 692.78         \\
\bottomrule
\end{tabular}}
\end{table}

\begin{table}[h]
\centering
\caption{The standard deviations for conditional flow (Cond-flow) on QM9 quantum property guidance.}
\resizebox{\textwidth}{!}{\begin{tabular}{cccccccc} 
\toprule
     & \textbf{Validity} & \textbf{Uniqueness} & \textbf{Novelty} & \textbf{Mol. stability} & \textbf{Atom stability} & \textbf{Connectivity} & \textbf{MAE}  \\ 
\midrule
$\alpha$ & 0.5921            & 0.2458              & 0.4712            & 0.4000                      & 0.0551                  & 0.6409                & 0.0231        \\
$\mu$    & 0.7703            & 0.1607              & 0.4443            & 0.6853                      & 0.0808                  & 0.4022                & 0.0034        \\
$C_v$    & 0.6730            & 0.2427              & 0.5601            & 1.0701                      & 0.2290                  & 0.5462                & 0.0467        \\
$\Delta \varepsilon$  & 0.4349            & 0.2875              & 0.3831            & 0.2139                      & 0.0404                  & 0.6322                & 12.74        \\
$\varepsilon_{\HOMO}$  & 0.1553            & 0.4576              & 0.7246            & 0.2464                      & 0.0252                  & 0.6218                & 9.054        \\
$\varepsilon_{\LUMO}$  & 0.4306            & 0.1234              & 0.5103            & 0.3798                      & 0.0351                  & 0.2207                & 9.139        \\
\bottomrule
\end{tabular}}
\label{tab.std_qm9_condflow}
\end{table}

\newpage
\subsection{\method}

\begin{table}[h]
\centering
\caption{The full results for \method on QM9 quantum property guidance. The results are averaged over 3 random seeds, where standard deviations are reported in~\cref{tab.std_qm9_tfg}.}
\resizebox{\textwidth}{!}{\begin{tabular}{cccccccc} 
\toprule
      & \textbf{Validity$\uparrow$} & \textbf{Uniqueness$\uparrow$} & \textbf{Novelty$\uparrow$} & \textbf{Mol. stability$\uparrow$} & \textbf{Atom stability$\uparrow$} & \textbf{Connectivity$\uparrow$} & \textbf{MAE$\downarrow$}  \\ 
\midrule
$\alpha$ & 
76.31\% &	99.43\% &	93.44\% &	89.89\%	& 98.75\%	& 66.40\%	& 3.52 \\
$\mu$  &  75.34\%&99.01\%&89.89\%&86.85\%&98.33\%&63.64\%&0.88
        \\
$C_v$  & 75.56\% & 98.99\% & 90.89\% & 87.78\% & 98.43\% & 68.55\% & 1.48      \\
$\Delta\varepsilon$   & 75.95\%            & 97.42\%              & 87.26\%             & 88.41\%                       & 98.52\%                   & 64.86\%                 & 914        \\
$\varepsilon_{\HOMO}$  & 87.93\%             & 95.90\%               & 89.14\%             & 94.51\%                       & 99.32\%                   & 73.72\%                 & 364         \\
$\varepsilon_{\LUMO}$  & 80.76\%             & 98.46\%               & 90.57\%             & 91.58\%                       & 99.00\%                   & 67.42\%                 &  998        \\
\bottomrule
\end{tabular}}
\end{table}

\begin{table}[h]
\centering
\caption{The standard deviations for \method on QM9 quantum property guidance.}
\resizebox{\textwidth}{!}{\begin{tabular}{cccccccc} 
\toprule
     & \textbf{Validity} & \textbf{Uniqueness} & \textbf{Novelty} & \textbf{Mol. stability} & \textbf{Atom stability} & \textbf{Connectivity} & \textbf{MAE}  \\ 
\midrule
$\alpha$ & 0.8051 & 0.1015 & 0.4419 & 0.7518 & 0.0900 & 1.0564 & 0.0523
        \\
$\mu$    & 0.0693 & 0.1361 & 0.4844 & 0.3800 & 0.0751 & 0.3166 & 0.0164
       \\
$C_v$    & 0.6660 & 0.2127 & 0.5501 & 1.0601 & 0.2390 & 0.3462 & 0.0455
       \\
$\Delta \varepsilon$  & 0.6673 & 0.1058 & 0.2498 & 0.4743 & 0.3528 & 0.3407 & 20.7743
     \\
$\varepsilon_{\HOMO}$ & 0.4603 & 0.1501 & 0.2425 & 0.1997 & 0.0416 & 0.2761 & 7.4903
       \\
$\varepsilon_{\LUMO}$ & 0.1000 & 0.1501 & 0.1102 & 0.2589 & 0.0173 & 0.2454 & 14.9410
       \\
\bottomrule
\end{tabular}}
\label{tab.std_qm9_tfg}
\end{table}

\end{document}